%% file: main.tex
\documentclass[runningheads]{llncs}
\usepackage[T1]{fontenc}
\usepackage{multirow}
\usepackage[misc]{ifsym}

\usepackage{graphicx}   
\usepackage{url}        
\usepackage{amssymb}
\usepackage{amsmath}
\usepackage{dsfont}
\usepackage{upgreek}
\usepackage{bbm}			%
\usepackage{tikz}
\usetikzlibrary{arrows.meta,positioning,calc}
\usepackage{pgfplots}
\pgfplotsset{compat=1.18}
\usepackage{xspace}
\usetikzlibrary{arrows}
\usepackage{algorithm}
\usepackage{algorithmicx}
\usepackage{algpseudocode}
\usepackage{listings}
\newcommand{\argmin}{\operatornamewithlimits{argmin}}

\renewcommand*{\top}{{\mkern-1.5mu\mathsf{T}}}
\newcommand{\ind}       {\mathds{1}}%
\newcommand{\Ind}[1]    { \ind{\{#1\}} }

\usepackage[labelfont=bf]{caption}
\usepackage{subcaption}

\usepackage{xcolor}
\usepackage[hidelinks]{hyperref}
\hypersetup{
    colorlinks,
    linkcolor={red!80!black},
    citecolor={blue!50!black},
    urlcolor={blue!80!black}
}

\usepackage{booktabs,arydshln}
\makeatletter
\def\adl@drawiv#1#2#3{%
        \hskip.5\tabcolsep
        \xleaders#3{#2.5\@tempdimb #1{1}#2.5\@tempdimb}%
                #2\z@ plus1fil minus1fil\relax
        \hskip.5\tabcolsep}
\newcommand{\cmidruledashed}[1]{%
  \noalign{\vskip\aboverulesep
           \global\let\@dashdrawstore\adl@draw
           \global\let\adl@draw\adl@drawiv}
  \cdashline{#1}%
	\noalign{\vskip-\belowrulesep}
	\noalign{\vskip-\belowrulesep}
}
\makeatother
\usepackage{multirow}

\makeatletter
\newcommand{\pushright}[1]{\ifmeasuring@#1\else\omit\hfill$\displaystyle#1$\fi\ignorespaces}
\newcommand{\pushleft}[1]{\ifmeasuring@#1\else\omit$\displaystyle#1$\hfill\fi\ignorespaces}
\makeatother
\usepackage[normalem]{ulem} %

\newcommand{\VOID}[1] 				{}

\newcounter{marginNoteCounter}

\newcommand{\mySqBullet}		{\raisebox{0.25em}{{\scriptsize$_\blacksquare$}}}

\newcommand{\inlinetitle}[2]  {\vspace{4pt}\noindent\textbf{\emph{#1}{#2}}}

\newcommand{\Sec}[1]		{Sec.\,\ref{#1}}

\newcommand{\Fig}[1]		{Fig.\,\ref{#1}}\newcommand{\Eq}[1]			{Eq.\,\ref{#1}}\newcommand{\Tab}[1]		{Tab.\,\ref{#1}}\newcommand{\Alg}[1]		{Alg.\,\ref{#1}}%
\newcommand{\ie}   			{i.e.\@\xspace}
\newcommand{\eg}   			{e.g.\@\xspace}

\newcommand{\LIME}[1]	{{#1-LIME}\xspace}
\newcommand{\NDTLIME}	{{\LIME{NDT}}\xspace}

\usepackage{authblk} %

\title{Enhancing LIME using Neural Decision Trees}

\authorrunning{M.A. Bouyahia and A. Kalogeratos}

\author{Mohamed Aymen Bouyahia\inst{1,2} \and Argyris Kalogeratos\inst{1}}

\institute{ENS Paris-Saclay, Centre Borelli, CNRS, Gif-Sur-Yvette, France \and Le Crédit Lyonnais, Paris, France\\
\email{mohamed-aymen.bouyahia@lcl.fr}, \ \email{argyris.kalogeratos@ens-paris-saclay.fr}}

\begin{document}

\title{Enhancing LIME using Neural Decision Trees}

\maketitle

\begin{abstract}
Interpreting complex machine learning models is a critical challenge, especially for tabular data where model transparency is {para-mount}. Local Interpretable Model-Agnostic Explanations (LIME) has been a very popular framework for interpretable machine learning, also inspiring many extensions. While traditional surrogate models used in LIME variants (\eg linear regression and decision trees) offer a degree of stability, they can struggle to faithfully capture the complex non-linear decision boundaries that are inherent in many sophisticated black-box models. 
This work contributes toward bridging the gap between high predictive performance and interpretable decision-making. %
Specifically, we propose the \NDTLIME variant that integrates Neural Decision Trees (NDTs) as surrogate models. By leveraging the structured, hierarchical nature of NDTs, our approach aims at providing more accurate and meaningful local explanations. We evaluate its effectiveness on several benchmark tabular datasets, showing consistent improvements in explanation fidelity over traditional LIME surrogates.
\end{abstract}
\begin{keywords}
Explainable models, local interpretability, surrogate models, neural decision trees, fidelity, stability, regularity.
\end{keywords}

\section{Introduction}
\label{sec:intro}
In recent years, machine learning models have achieved remarkable success across a variety of domains, including healthcare, finance, and marketing. Many state-of-the-art models, though, especially complex ones like deep neural networks and ensemble methods, are often considered “black boxes” due to their lack of transparency. This opacity poses a significant barrier to their adoption in high-stakes settings where understanding the rationale behind a prediction is crucial for trust, accountability, and regulatory compliance.
Critically, in domains where decisions must be both accurate and interpretable, such as healthcare and finance, tabular data is the most common form of data. Despite the success of deep learning in several other contexts, building foundation models and beating tree-based models on tabular data \cite{Deep_learning_tabular,foundation_model_tabular} remain big challenges. This is because handling tabular data is a setting in which until recently tree-based models have been consistently better due to their inherent inductive biases, notably since they handle uninformative features gracefully \cite{dts-tabular-2022}. This inherent advantage of tree-based models on tabular data makes them the preferred choice in these applications. 

Explainable artificial intelligence (XAI) techniques have emerged to provide insights into model behavior. Among these, Local Interpretable Model-Agnostic Explanations (LIME) \cite{lime} has gained popularity for generating interpretable local explanations for individual predictions. LIME uses traditional and rather simple surrogate models, like linear regression or decision trees, to capture and explain locally the decision boundaries of a given more complex model. %
Subsequent studies have shown that LIME can exhibit low local fidelity and sensitivity to perturbations \cite{GLime_paper} \cite{sLIME}, and it can also struggle to faithfully capture complex, non-linear decision boundaries with traditional surrogates \cite{LIME_limitations}. In simple terms, this highlights a trade-off between capacity and explainability of the surrogate models used. Those limitations have spurred the development of many variants.

As detailed in Table \ref{tab:lime-variants}, these include methods like Anchors \cite{anchors} for rule-based explanations, GLIME \cite{GLime_paper} for improved stability and fidelity via unbiased sampling, and LIMEtree \cite{LIMEtree_paper} utilizing tree-based ensembles for consistency. Despite these advancements, many variants still rely on simple surrogate models that may fall short in accurately representing intricate local decision manifolds.

In this paper, we propose the novel \NDTLIME variant of the LIME framework, that improves interpretability and robustness by integrating Neural Decision Trees (NDTs) \cite{Neural_Decision_Tree,to_tree_or_not_to_tree,li2022surveyneuraltrees} as local surrogate models. NDTs combine the expressive power of neural networks with the hierarchical, rule-based structure of decision trees, enabling more faithful and stable explanations. 
By initializing NDTs from traditional decision trees, our \NDTLIME approach directly leverages this established strength of tree-based models. Furthermore, by employing the differentiable approximation of splits inherent in NDTs, our method can generate smoother local explanations, providing a more nuanced view of decision boundaries compared to the rigid, axis-aligned splits of traditional decision trees. 
Our approach leverages this synergy to improve both local and global interpretability for tabular datasets. %
By employing NDTs, we aim at providing more accurate and meaningful local explanations, directly addressing the fidelity limitations often encountered with simpler LIME surrogates.

We validate \NDTLIME through experiments on benchmark tabular datasets, demonstrating superior explanation fidelity compared to traditional LIME surrogates. \Tab{tab:lime-variants} provides a shortlist of existing methods (for a survey, see the survey in \cite{which_lime}) and gives the positioning of the proposed \LIME{NDT} therein. Additionally, we provide tools for discretization, visualization, and decision tree analysis to facilitate deeper insights into model behavior. This work contributes a practical and effective method to bridge the gap between black-box predictive models and transparent, trustworthy AI.

\begin{table*}[t]
\centering
\resizebox{0.6\textwidth}{!}{%
\centerline{
\begin{tabular}{lllll}
\toprule
\textbf{LIME variant} & \textbf{Surrogate model} & \textbf{Improvement features} & \textbf{Limitations} & \textbf{Repository} \\
\midrule
LIME \cite{lime} & Linear & Standard LIME: perturbation-based local explanations & Instability, low fidelity for complex models & \href{https://github.com/marcotcr/lime}{GitHub} \\
Anchors \cite{anchors} & Rule-based & High-precision rules for local explanations & Explaining complex output spaces can be challenging & \href{https://github.com/marcotcr/anchor}{GitHub} \\
LORE \cite{LORE} & Decision Tree & Generates local logical rules and counterfactuals to explain decisions. & Requires a genetic algorithm for neighborhood generation & \href{https://github.com/riccotti/LORE}{GitHub} \\
GLIME \cite{GLime_paper} & Linear & Improved stability and local fidelity via unbiased sampling & Still limited by simple surrogate & \href{https://github.com/thutzr/GLIME-General-Stable-and-Local-LIME-Explanation}{GitHub} \\
DLIME \cite{DLIME} & Linear & Deterministic sampling using hierarchical clustering & May miss subtle patterns of model & \href{https://github.com/rehmanzafar/dlime_experiments}{GitHub} \\
MPS-LIME \cite{LIME_MPS} & Linear & Modified perturbation sampling to improve fidelity & Complex to implement, limited generalization &  -- \\
KL-LIME \cite{LIME_KL} & Linear & Uses Kullback-Leibler projection for Bayesian models & Limited to Bayesian models & -- \\
MeLIME \cite{MeLIME_paper} & Linear & Data-aware neighborhood sampling, robust convergence, counterfactuals & Requires domain knowledge for setup & \href{https://github.com/tiagobotari/melime?tab=readme-ov-file\#melime-features}{GitHub} \\
sLIME \cite{sLIME} & Linear & Stabilizes explanations by adaptively determining perturbations & Effectiveness can vary based on black-box model type & \href{https://github.com/ZhengzeZhou/slime}{GitHub} \\
ILIME \cite{ILIME_paper} & Linear + Influence & Combines proximity and influence functions for more faithful explanations & Requires access to model gradients for influence functions & -- \\
BayLIME \cite{BayLIME_paper} & Bayesian Linear Regr. & Improves consistency, robustness, and fidelity by leveraging priors & Risk of introducing bias with incorrect priors & \href{https://github.com/x-y-zhao/BayLime}{GitHub}\\
LIMEtree \cite{LIMEtree_paper} & Tree-based Ensemble & Greater consistency towards perturbations in local explanations & Less interpretable due to ensemble aggregation & -- \\
OptiLIME \cite{optimLIME} & Linear & Framework to optimize kernel width for adherence-stability trade-off & Computationally intensive due to Bayesian Optimization & \href{https://github.com/giorgiovisani/lime_stability/tree/master}{GitHub} \\
GMM-LIME \cite{GMM_LIME_paper} & Gaussian Mixture & Improved stability and potentially better local approximation & Increased model complexity and parameter tuning & -- \\
\midrule
\textbf{\NDTLIME (ours)} & Neural Decision Tree & Uses NDTs as surrogate models for higher fidelity & More computationally intensive & \href{https://github.com/aymen20002005/lime_ndt}{GitHub} \\
\bottomrule
\end{tabular}%
}%
}%
\vspace{0.5em}
\caption{A comprehensive list of LIME variants in the literature.}
\label{tab:lime-variants}
\end{table*}

\section{Background}
\label{sec:background}

\subsection{Local Interpretable Model-Agnostic Explanations}
LIME, standing for Local Interpretable Model-Agnostic Explanations \cite{lime}, is a widely adopted post-hoc method that explains individual predictions by approximating the behavior of a complex model ($f$) with a simpler and interpretable surrogate model ($g$), \eg linear regression or decision trees.
The key steps LIME follows to assess the decision $f(x)$, that is the output of the black-box model for a data instance of interest $x$, are:
\begin{enumerate}
    \item \emph{Perturbation}: Generate a synthetic dataset with $R$ instances $x'$ around the original instance $x$, by randomly perturbing feature values.
    \item \emph{Weighting}: Assign a weight $w$ to each perturbed $x'$ instance based on its proximity to $x$. %
    \item \emph{Surrogate model}: Train an interpretable model $g$ to imitate the output of the black-box on the weighted perturbed instances, $\{w, x', f(x')\}_{x'}$, and use it to explain the local decision boundary of $f$ around $x$.
\end{enumerate}\vspace{0.2cm}
Formally, LIME's explanation for a decision $f(x)$ writes: 
\begin{equation}\label{eq:lime-objective}
expl(x) = \argmin_{g\in G} L(f,g,\pi_x) + \Omega(g),
\end{equation}
where \(\pi_{x}\) is a proximity kernel that emphasizes samples closer to the instance of interest, $L(f,g,\pi_x)$ is a loss function measuring how well g approximates f in the locality defined by $\pi_x$, and $\Omega(g)$ is a complexity penalty that encourages simpler explanations.

While the LIME framework is generic, model-agnostic, and flexible, its practical instantiation is challenging due to a series of critical choices that need to be made, which have motivated many research works:
\begin{itemize}
		\item \emph{Surrogate model selection}: This choice affects 
		the decision boundary approximation. Simple models, \eg linear, may fail to capture non-linear or hierarchical decision boundaries inferred by complex black-box models, leading to explanations of low-fidelity. %
    \item \emph{Instability}: Explanations can vary significantly with minor changes in perturbations or sampling.%
		\item \emph{Tuning difficulty}: Sensitive parameters need to be tuned, such as to define the neighborhood kernel width, a suitable distance measure, and perturbation strategy for the locality $\pi_x$. 
\end{itemize}
Inadequate choices for the above can lead to unfaithful or unstable explanations. These challenges are extensively reviewed in the literature, for instance in the survey in \cite{which_lime}.

\inlinetitle{Surrogate models}{.}~%
Having the generic LIME framework, the choice of surrogate model is the main point on which the community has focused (see \Tab{tab:lime-variants}). Although the main principle is to choose a surrogate model that is properly simple and interpretable, in fact the model at the same time has to have sufficient capacity to capture the local decision boundaries of the black-box model.
Decision Trees (DTs) are a typical choice for surrogate models in LIME, for both classification and regression tasks, as they are inherently interpretable due to their \emph{rule-based structure} that mimics human reasoning \cite{LIMEtree_paper}.
However, traditional DTs are trained via greedy splitting, which can lead to suboptimal or unstable trees.
Moreover, their rigid splits may not adequately represent the decision boundaries of modern ML models, particularly in high-dimensional spaces \cite{dts-tabular-2022}. A number of LIME variants are briefly described in Appendix \ref{app:implementation-competitors}.

\subsection{Quality measures for surrogate models}
\label{sec:quality-measures}
 
We employ three quality metrics to evaluate surrogate models with respect to a black-box model, namely, \emph{fidelity}, \emph{stability}, and \emph{regularity}. These metrics are complementary %
as they jointly capture faithfulness, robustness to randomness, and smoothness of explanations. Each of the metrics computes a score for each of the $N=|D|$ datapoints of a given dataset $D$, and then the overall score for the dataset is computed by averaging the instance-wise scores:
\begin{equation}\label{eq:qmeasure-averaging}
\mathrm{QualMetric}(D) = \frac{1}{N}\sum_{i=1}^{N}\mathrm{QualMetric}(x_{i}).
\end{equation}
In the following, we denote by $E(x) \in \mathbb{R}^d$ the \emph{explanation vector} produced for the data instance $x\in D$, which is the feature importance vector, \ie the coefficients or weights derived from the trained local surrogate model.

\inlinetitle{Fidelity}{.}~The primary goal of a local surrogate model is to be faithful to the approximated black-box model. Local fidelity measures how well the explanation model mimics the predictions of the black-box model on the neighborhood data generated around the instance of interest. High fidelity indicates that the explanation is a trustworthy representation of the model's local behavior.

To quantify fidelity, we first generate %
$R$ perturbed samples around a chosen test instance. We then obtain predictions for this data from both the complex black-box model and the trained surrogate model. We measure fidelity using the Coefficient of Determination ($\mathrm{R}^2$) score \cite{Montgomery2012}, which calculates the proportion of the variance in the black-box model's predictions that is predictable from the surrogate model's predictions:
\begin{equation}
\mathrm{Fidelity}(x_{i}) := \mathrm{R}^2 = 1 - \frac{\sum_{j=1}^{n} [f(x'_{i,j}) - g(x'_{i,j})]^2}{\sum_{j=1}^{n} [f(x'_{i,j}) - \overline{Y}]^2},
\label{eq:fidelity_score_formula}
\end{equation}
 where $\bar{Y} = \frac{1}{n} \sum_{j=1}^{n} f(x'_{i,j})$. This index takes values in $(-\infty, 1]$. A value of $\mathrm{R}^2 = 1$ indicates a perfect local approximation, while negative values correspond to explanations that perform worse than a constant baseline predictor. 

\inlinetitle{Stability}{.}~An essential quality of a reliable explanation method is stability: for a given instance $x$, the explanation should not change drastically due to random artifacts of the explanation process. To evaluate stability, we generate $R$ random local data perturbations of $x$, and then we produce one explanation for each of them, $E_r(x)$, $r=1,...,R$. We then calculate the stability score by measuring the average pairwise cosine similarity among these $R$ vectors. The cosine similarity is employed as it assesses the structural consistency of the explanation, \ie whether the relative ordering and direction of feature importance remain the same, regardless of minor variations in the magnitude of the weights. 

The stability score defined below by \Eq{eq:stability_formula} is normalized by the total number of unique explanation pairs, and takes values in $[0,1]$. A score closer to $1$ indicates high stability, hence the explanation is robust to the randomness inherent in the LIME sampling process:
\begin{equation}
\!\!\mathrm{Stability}(x_{i}) = \frac{1}{\binom{R}{2}} \sum_{r=1}^{R-1} \sum_{r'=r+1}^{R} \!\!\mathrm{Cosine}(E_r(x_{i}), E_{r'}(x_{i})),
\label{eq:stability_formula}
\end{equation}
where $\binom{R}{2} = \frac{R!}{2!(R-2)!}$, and $\mathrm{Cosine}(a, b) = \frac{a^\top b}{||a||_2||b||_2}$, for two column vectors $a$, $b$ of same dimension.

\inlinetitle{Regularity}{.}~Further, we can assess the \emph{smoothness} of an explanation method through regularity, which measures how 
the generated explanations vary locally in the input space, and more specifically across nearby datapoints. This can be seen as a stability score conditional to a given $x_i$, where instead of random perturbations, the test points are constrained over the real nearby data. Intuitively, if two datapoints are close to each other, their corresponding explanations
should also be similar. High regularity indicates that the surrogate model yields explanations that evolve smoothly and coherently, avoiding abrupt changes that could hinder human interpretability.

For each instance $x$ in a test set, we identify its $k$-Nearest Neighbors ($k$-NN), denoted by $\mathcal{N}_k(x)$, using the Euclidean distance in the input space. The \emph{local regularity} for $x$ is then computed as the average cosine similarity between the explanation of $x$ and those of its neighbors:
\begin{equation}
\mathrm{Regularity}_{k}(x) = \frac{1}{k} \sum_{x_i \in \mathcal{N}_k(x)} 
\mathrm{Cosine}(E(x), E(x_{i})).
\label{eq:local_regularity}
\end{equation}

This index takes values in $[0,1]$. A regularity score closer to $1$ indicates that similar inputs produce highly consistent explanations, reflecting smooth and well-structured surrogate behavior. In contrast, low regularity suggests that explanations fluctuate irregularly across neighboring points, which can undermine the interpretability and reliability of the explanation method.

\subsection{Understanding Neural Decision Trees}\label{sec:NDTs}

There exist several propositions regarding how to build an NDT, see surveys in \cite{coevolvingNDTsDTs,li2022surveyneuraltrees}. In the following, we refer to the work in \cite{Neural_Decision_Tree}, and also the work in \cite{to_tree_or_not_to_tree} that sheds light to certain properties and practical considerations of that model. 
NDTs bridge the gap between neural networks and symbolic reasoning, and most notably DTs that are rule-based models:
\begin{itemize}
    \item \emph{Differentiable architecture}: NDTs replace hard splits (\eg $\Ind{x_d \leq \beta_i}$) with \emph{soft, differentiable approximations}, enabling end-to-end training via backpropagation \cite{to_tree_or_not_to_tree}.
    \item \emph{Global optimization}: Unlike greedy DTs, NDTs optimize all splits simultaneously, potentially discovering more expressive and stable decision boundaries \cite{Neural_Decision_Tree}.
    \item \emph{Hybrid strengths}: NDTs retain the interpretability of DTs while leveraging the \emph{representation power of neural networks}, making them suitable for approximating complex models \cite{to_tree_or_not_to_tree}.
\end{itemize}

More formally, consider a DT defined over an input space $[0,1]^d$ with $K \geq 2$ leaves.  
Recent work has shown that any such DT can be represented as a three-layer neural network with two hidden layers and one output layer \cite{Neural_Decision_Tree}. This simple architecture is illustrated in \Fig{fig:dt-to-ndt}.
Let $\mathcal{H} = \{H_1, \dots, H_{K-1}\}$ be the set of hyperplanes corresponding to the internal splits of the DT, where each hyperplane is:
\begin{equation}
H_k = \{\mathbf{x} \in [0,1]^d \;|\; h_k(\mathbf{x}) = 0 \}, \quad
h_k(\mathbf{x}) = x^{(j_k)} - \alpha_{j_k},
\end{equation}
where $j_k \in \{1, \dots, d\}$ is the index of the splitting feature, and $\alpha_{j_k} \in [0,1]$ is the associated split threshold.

\inlinetitle{First hidden layer}{.}~%
This layer encodes the decisions at internal nodes.  
Let $\tau(u) = 2 \Ind{u \geq 0} - 1$ be a threshold activation function.  
The activation for neuron $k$ is:
\begin{equation}\label{eq:thresholded_activation}
z^{(1)}_k = \tau\big(x^{(j_k)} - \alpha_{j_k}\big).
\end{equation}
The weight matrix $W^{(1)} \in \mathbb{R}^{(K-1) \times d}$ is defined by:
\[
W^{(1)}_{k,m} =
\begin{cases}
1, & \text{if node $k$ splits on feature $m$},\\
0, & \text{otherwise},
\end{cases}
\]
and the bias vector $b^{(1)} \in \mathbb{R}^{K-1}$ with $b^{(1)}_k = -\alpha_{j_k}$.  
The output of this layer is a $\{\pm 1\}$ vector encoding all internal decisions.\vspace{0.4cm}

\inlinetitle{Second hidden layer}{.}~%
This layer maps the internal decision bits to leaf indicators.  
Let $\mathcal{L} = \{L_1, \dots, L_K\}$ be the set of leaves, and $l(k')$ the length of the path from root to leaf $L_{k'}$.  
We connect neuron $k$ in layer $1$ to neuron $k'$ in layer $2$ if the hyperplane $H_k$ is on the path to $L_{k'}$. The connection weight is \cite{Neural_Decision_Tree}: 
\[
W^{(2)}_{k,k'} =
\begin{cases}
+1, & \text{if $H_k$ leads to a right child on path to $L_{k'}$},\\
-1, & \text{if $H_k$ leads to a left child on path to $L_{k'}$},\\
\phantom{-}0, & \text{otherwise}.
\end{cases}
\]
The bias is $b^{(2)}_{k'} = -\frac{1}{2}(l(k') - 1)$.  
\\The activation is: $v_{k'}(\mathbf{x}) = \tau( W^{(2)T} z^{(1)} + b^{(2)} )_{k'}$, yielding a vector with the value $-1$ in all entries but for the leaf node reached by $\mathbf{x}$ for which the value is $+1$.\vspace{0.4cm}

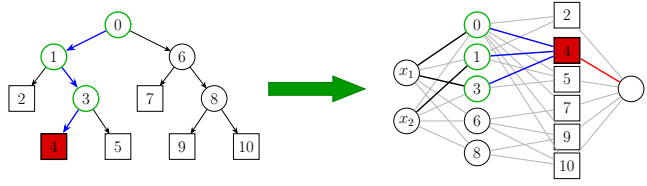
\begin{figure}[t]
\centering
\resizebox{0.7\textwidth}{!}{		
\begin{tikzpicture}[
>=Stealth,
circ/.style={circle,draw,minimum size=8mm,inner sep=0pt, font=\Large},
sq/.style={rectangle,draw,minimum size=8mm,inner sep=0pt, font=\Large},
greenC/.style={circ,draw=green!70!black,very thick},
redSq/.style={sq,fill=red!85!black,text=black,very thick, font=\Large},
blueedge/.style={blue,very thick,->},
blueline/.style={blue,very thick},
grayedge/.style={gray!60},
]

\begin{scope}

\node[greenC] (n0) at (0,2) {0};
\node[greenC] (n1) at (-2,1) {1};
\node[circ]   (n6) at (2,1) {6};

\node[sq]     (n2) at (-3,-0.3) {2};
\node[greenC] (n3) at (-1,-0.3) {3};
\node[sq]     (n7) at (1,-0.3) {7};
\node[circ]   (n8) at (3,-0.3) {8};

\node[redSq]  (n4) at (-2,-1.8) {4};
\node[sq]     (n5) at (0,-1.8) {5};
\node[sq]     (n9) at (2,-1.8) {9};
\node[sq]     (n10) at (4,-1.8) {10};

\draw[blueedge] (n0) -- (n1);
\draw[->] (n0) -- (n6);

\draw[->] (n1) -- (n2);
\draw[blueedge] (n1) -- (n3);

\draw[blueedge] (n3) -- (n4);
\draw[->] (n3) -- (n5);

\draw[->] (n6) -- (n7);
\draw[->] (n6) -- (n8);
\draw[->] (n8) -- (n9);
\draw[->] (n8) -- (n10);

\end{scope}

\draw[fill=green!60!black,draw=green!50!black]
(4.7,0.2) -- (6.7,0.2) -- (6.7,0.4)
-- (7.7,0) -- (6.7,-0.4)
-- (6.7,-0.2) -- (4.7,-0.2) -- cycle;

\begin{scope}[xshift=10cm]

\node[circ] (x1) at (-1,0.5) {$x_1$};
\node[circ] (x2) at (-1,-1) {$x_2$};

\node[greenC] (m0) at (1.2,2) {0};
\node[greenC] (m1) at (1.2,1) {1};
\node[greenC] (m3) at (1.2,0) {3};
\node[circ]   (m6) at (1.2,-1) {6};
\node[circ]   (m8) at (1.2,-2) {8};

\node[sq]    (s2) at (4,2.3) {2};
\node[redSq] (s4) at (4,1.2) {4};
\node[sq]    (s5) at (4,0.3) {5};
\node[sq]    (s7) at (4,-0.6) {7};
\node[sq]    (s9) at (4,-1.5) {9};
\node[sq]    (s10) at (4,-2.4) {10};

\node[circ] (out) at (6,0) {};

\foreach \j in {m0,m1,m3,m6,m8}
  \foreach \i in {x1,x2}
    \draw[grayedge] (\i) -- (\j);

\foreach \i/\j in {
m0/s2,
m0/s4,
m0/s5,
m0/s7,
m0/s9,
m0/s10,
m1/s2,
m1/s4,
m1/s5,
m3/s4,
m3/s5,
m6/s7,
m6/s9,
m6/s10,
m8/s9,
m8/s10}
\draw[grayedge] (\i) -- (\j);

\foreach \j in {s2,s4,s5,s7,s9,s10}
  \draw[grayedge] (\j) -- (out);

\draw[blueline] (m0) -- (s4);
\draw[blueline] (m1) -- (s4);
\draw[blueline] (m3) -- (s4);
\draw[red,very thick] (s4) -- (out);

\draw[very thick] (x1) -- (m0);
\draw[very thick] (x1) -- (m3);
\draw[very thick] (x2) -- (m1);

\end{scope}

\end{tikzpicture}
}%
\caption{Transforming a DT to its equivalent NDT.}
\label{fig:dt-to-ndt}
\end{figure}

\inlinetitle{Output layer}{.}~%
For regression, let $\bar{y}_{k'}$ be the average target value for samples in leaf $L_{k'}$. Then:
\[
W^{\text{out}}_{k'} = \frac{1}{2} \bar{y}_{k'}, \quad
b^{\text{out}} = \frac{1}{2} \sum_{k'=1}^K \bar{y}_{k'}.
\]
The prediction is:
\[
\hat{y}(\mathbf{x}) = (W^{\text{out}})^\top v(\mathbf{x}) + b^{\text{out}}.
\]
For classification with $C$ classes, let $P^c_{k'}$ be the empirical class probability in leaf $L_{k'}$. Then:
\[
W^{\text{out}}_{k',c} = \frac{1}{2} P^c_{k'}, \quad
b^{\text{out}}_{c} = \frac{1}{2} \sum_{k'=1}^K P^c_{k'},
\]
and the logits are:
\[
{f}(\mathbf{x}) = (W^{\text{out}})^\top v(\mathbf{x}) + b^{\text{out}}.
\]

\section{LIME with NDTs as surrogate models}
\label{sec:lime-ndt-main}

In \Sec{sec:NDTs},  %
we provided a comprehensive overview of Neural Decision Trees (NDTs), highlighting their unique properties and why they are particularly well-suited for generating interpretable explanations. NDTs combine the hierarchical, rule-based structure of traditional decision trees with the expressive power of neural networks, enabling them to capture complex, non-linear decision boundaries while retaining interpretability. This makes NDTs an ideal candidate for addressing the limitations of conventional surrogate models in explainable AI, particularly in scenarios where both fidelity and transparency are critical.

Next, we propose the \LIME{NDT} variant that integrates effectively NDTs as surrogate models in the LIME framework. By leveraging their differentiable architecture and global optimization capabilities, NDTs offer a robust solution for improving the local fidelity and stability of explanations. This integration not only enhances the accuracy of local approximations but also ensures that explanations remain human-interpretable and consistent across perturbations, which is a key requirement for trustworthy AI systems.

\subsection{The proposed \LIME{NDT} method}
\label{sec:lime-ndt}

To overcome the limitations of standard LIME surrogates, we propose using Neural Decision Trees (NDTs) as local surrogate models. NDTs combine the hierarchical, rule-based structure of decision trees with the expressive power of neural networks \cite{to_tree_or_not_to_tree}. Importantly, NDTs retain interpretability: for any instance, one can trace the path from root to leaf to extract explicit rules or feature contributions, and softened decision boundaries via differentiable activations do not obscure this interpretability. This ensures explanations remain both faithful to the black-box model and comprehensible to humans.

Beyond serving as a drop-in replacement for traditional LIME surrogates, the integration of NDTs introduces several methodological advantages. Unlike training an NDT from scratch, \NDTLIME initializes the neural decision tree using the structure and split parameters of a conventional decision tree trained on the local neighborhood. This \emph{warm-start} approach allows the surrogate to inherit a meaningful hierarchical partition of the input space while still permitting fine-tuning through differentiable optimization, resulting in a smoother and more stable local decision boundary.

Once initialized, the NDT is refined using gradient-based optimization to minimize the discrepancy between its predictions and those of the black-box model on the perturbed samples \cite{lime}. This optimization follows a local fidelity objective defined as:
\begin{equation}
\mathcal{L}_{\text{fidelity}} = \sum_{i=1}^{N} \pi(x_i, x)\, \| f(x_i) - g(x_i) \|^2,
\label{eq:fidelity_loss}
\end{equation}
where $\pi(x_{i}, x) = \exp(- \frac{\|x-x_{i}\|^2}{\sigma^2})
$, $\sigma$ is the kernel width that controls locality. We optimize this loss via backpropagation over the soft splits and leaf parameters of the NDT, thereby tuning the surrogate to align locally with the black-box predictions. By minimizing \(\mathcal{L}_{\text{fidelity}}\), the surrogate concentrates its representational power on the most relevant region, improving local fidelity while preserving interpretability \cite{lime}.

\begin{algorithm}[t]\small
\caption{\textbf{\NDTLIME: LIME with Neural Decision Trees}}
\label{alg:limendt}
\begin{algorithmic}
\renewcommand{\algorithmicrequire}{\textbf{Input:}}
\renewcommand{\algorithmicensure}{\textbf{Output:}}
    \Require Data instance $x$ to explain; black-box model $f$; number of perturbed samples $N$; proximity kernel $\pi$; feature set $F$.
    \Ensure Local explanation $expl$ for $x$ (\eg feature importance vector or rule set).
		
		\vspace{0.3em}
		\hrule
		\vspace{0.3em}

    \State $\mySqBullet$~\textbf{Step 1: Generate perturbed samples around the instance}
    \State \hspace{1em}$X' \gets \mathrm{Perturb}(x, N, F)$
    \Comment{Randomly perturb features of $x$} \\
		 \hfill to form a local neighborhood

    \State $\mySqBullet$~\textbf{Step 2: Obtain black-box predictions}
    \State \hspace{1em}$Y \gets f(X')$
    \Comment{Evaluate the black-box model}\\
		\hfill on all perturbed instances

    \State $\mySqBullet$~\textbf{Step 3: Compute proximity weights}
    \State \hspace{1em}$W \gets \pi(X', x)$
    \Comment{Assign higher weights to samples closer to $x$}

    \State $\mySqBullet$~\textbf{Step 4: Train the NDT surrogate model}
    \State \hspace{1em}(i) $DT_{\text{surr}} \gets \mathrm{TrainDT}(X', Y, \text{sample\_weight}=W)$
    \State \hspace{1em}(ii) $g \gets \mathrm{ConvertDTtoNDT}(DT_{\text{surr}})$
    \Comment{Initialize NDT ${\color{red}{\star}}$}
    \State $\mySqBullet$~\textbf{Step 5: Extract interpretable local explanation}
    \State \hspace{1em}$expl \gets \mathrm{ExtractRules}(g, x, F)$
    \Comment{Derive feature importance}\\ 
		\hfill or rule-based explanation for $x$

    \State \Return $expl$
\end{algorithmic}
\hrule\leavevmode \\
${\color{red}{\star}}$ {\footnotesize Initialize NDT structure and parameters (splits, thresholds, leaf values) using an input trained DT.}
\end{algorithm}

\NDTLIME is summarized in \Alg{alg:limendt}, which outlines the main steps from generating perturbed samples to training the NDT surrogate and extracting the final local explanation.
The pseudocode above summarizes the \NDTLIME pipeline. The conversion from a decision tree to an NDT allows leveraging the interpretability of rule-based structures while enabling continuous optimization in the local neighborhood of the explained instance. In practice, this process adds only a modest computational overhead compared to traditional LIME but yields explanations that are smoother, more faithful to the underlying model.

\subsection{Properties of NDTs as LIME surrogate models}
\label{sec:properties-lime-ndt}

\inlinetitle{The NDT of choice}{.}~%
As already mentioned in \Sec{sec:NDTs}, there are several NDT architectures. Our choice of NDT model is driven by its unique properties that are particularly well-suited for generating interpretable local explanations.

A key property we leverage is regularity, which stems from the use of soft, differentiable splits. Unlike the rigid, axis-aligned partitions of classical decision trees, these soft splits promote smoother transitions between local regions of the input space. This allows \NDTLIME to provide explanations that are both faithful to the complex model and continuous across the input manifold. As a result, the explanations gain enhanced human interpretability and coherence, as minor changes in the input data do not lead to abrupt, counter-intuitive shifts in the explanation. Technically, this smoothness is achieved by replacing the hard threshold function $\tau(u)$ from Eq.~\eqref{eq:thresholded_activation}, which is non-differentiable and thus prevents gradient-based optimization, with a smooth activation function~\cite{to_tree_or_not_to_tree}:
\[
\sigma_{\gamma}(u) = \tanh(\gamma u), \quad \gamma > 0,
\]
where the parameter $\gamma$ controls the sharpness of the decision boundaries. 
Prior work in~\cite{to_tree_or_not_to_tree} has investigated in detail how these smoothing parameters affect the flexibility of NDTs.
Specifically, it was shown that large $\gamma$ values approximate hard, tree-like splits, whereas smaller $\gamma$ values produce smoother, neural-network-like transitions.
Furthermore, an analysis of the relationship between the parameters of the first and second layers, $\gamma_1$ and $\gamma_2$, led to the suggestion that $\gamma_1$ should be greater than or equal to $\gamma_2$ ($\gamma_1 \geq \gamma_2$). 
This guideline helps preserve meaningful split activations at the decision nodes while enabling smoother, more adaptive leaf-level predictions, and it is a handy rule of thumb when exploring the parametrization space for NDTs.

\inlinetitle{Plotting the decision boundaries}{.}~
To qualitatively assess the smoothness induced by NDTs, we visualize the local decision boundaries of three surrogate models: a standard DT, an NDT initialized from the DT without further refinement, and a fine-tuned NDT.
We use a subset of two features and two classes from the Wine dataset. %
A single instance is selected as the point of interest, around which $R=800$ perturbed samples are generated to form a local neighborhood. The labels of these perturbed samples are obtained from a Random Forest trained on the full dataset, which serves as the black-box model to be explained. Each surrogate model is then trained on this locally generated dataset. %
For each point in the grid, predictions are computed using the corresponding surrogate model, allowing us to render their respective decision regions.

As illustrated in \Fig{Decision_boundaries}, the DT exhibits sharp and axis-aligned decision boundaries that reflect the rigidity of hard splits. In contrast, the NDT initialized from that DT, before refinement, produces poorly structured and irregular decision regions. This is a direct consequence of replacing hard threshold functions with soft activations without subsequent refinement. %
Finally, through gradient-based optimization, the fine-tuned NDT adapts its soft splits and leaf parameters to the local neighborhood, transforming the initially degraded boundary into a coherent and faithful approximation. Eventually, NDT recovers well-shaped and smooth decision boundaries that follow the local behavior of the black-box model. 

\begin{figure}[t]
    \centering
    \includegraphics[width=\columnwidth]{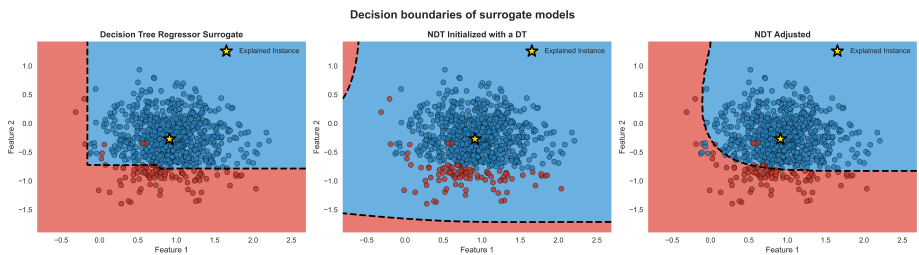}
    \caption{Local decision boundaries of surrogate models on the Wine dataset (a subset of $2$ features and $2$ classes). From left to right: a standard DT, an NDT initialized from the DT without fine-tuning, and a fine-tuned NDT. While the DT exhibits rigid, axis-aligned partitions, the non-fine-tuned NDT produces degraded and poorly structured decision regions due to soft splits applied without optimization. Fine-tuning the NDT restores coherent and smooth decision boundaries, yielding a faithful approximation of the black-box model’s local decision boundary.}
    \label{Decision_boundaries}
\end{figure}
\begin{figure}[t]
    \centering
    \includegraphics[width=0.7\columnwidth]{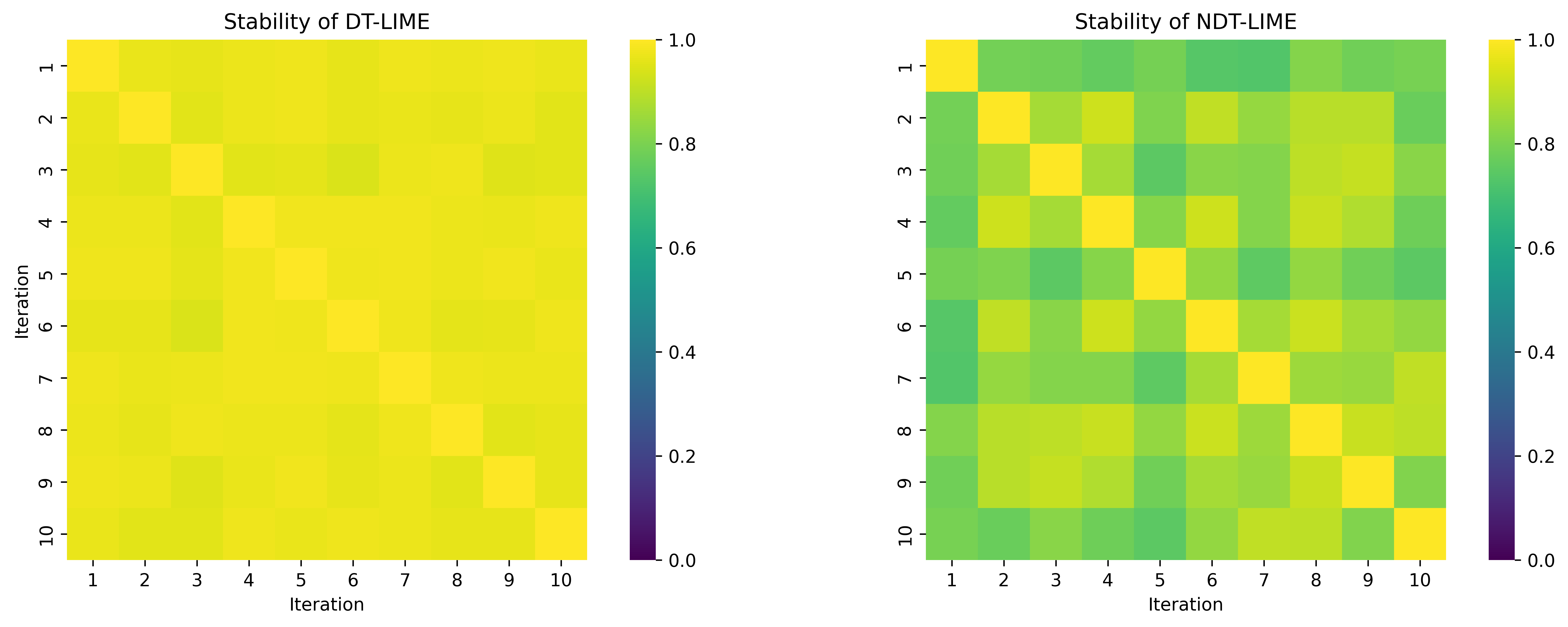}
    \caption{Stability matrices of DT-LIME (left) and NDT-LIME (right) explanations for a fixed instance from the Digits dataset.
}
    \label{stability_matrix}
\end{figure}

\inlinetitle{Continuity of explanations}{.}~%
Let $g_{\mathrm{NDT}} : \mathbb{R}^d \to \mathbb{R}$ denote an NDT surrogate. We define the associated explanation map as:
\begin{equation}
E : \mathbb{R}^d \to \mathbb{R}^d,
\qquad
E(x) := \nabla g_{\mathrm{NDT}}(x),
\label{eq:explanation_map}
\end{equation}
where $\nabla g_{\mathrm{NDT}}(x)$ is the gradient of the surrogate with respect to the input features.

\begin{proposition}[Continuity of explanations]
As $g_{\mathrm{NDT}} \in C^1(\mathbb{R}^d)$ (formally established in Appendix~\ref{NDT_C_infinite}), then the explanation map $E$ defined in~\Eq{eq:explanation_map}, belongs to $C^0(\mathbb{R}^d)$. In particular, for any $x \in \mathbb{R}^d$ and for any $\varepsilon > 0$, there exists $\delta > 0$ such that:\ \ %
$\|x - x'\| < \delta
\quad \Rightarrow \quad
\|E(x) - E(x')\| < \varepsilon
$.
\end{proposition}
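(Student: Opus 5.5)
The argument is short because the hypothesis does almost all the work. By definition, $g_{\mathrm{NDT}} \in C^1(\mathbb{R}^d)$ means that every partial derivative $\partial g_{\mathrm{NDT}}/\partial x^{(m)}$, $m=1,\dots,d$, exists and is continuous on $\mathbb{R}^d$. The explanation map $E(x)=\nabla g_{\mathrm{NDT}}(x)$ in \Eq{eq:explanation_map} is the vector of these $d$ partials. A map into $\mathbb{R}^d$ is continuous exactly when each coordinate function is continuous, so $E \in C^0(\mathbb{R}^d)$.

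For the explicit $\varepsilon$--$\delta$ statement, I would fix $x$ and $\varepsilon>0$ and apply continuity of each partial at $x$ with tolerance $\varepsilon/\sqrt{d}$. This gives radii $\delta_1,\dots,\delta_d>0$, and I would take $\delta=\min_m \delta_m>0$. Then $\|x-x'\|<\delta$ implies
\[
\|E(x)-E(x')\|_2^2=\sum_{m=1}^d \bigl|\partial_m g_{\mathrm{NDT}}(x)-\partial_m g_{\mathrm{NDT}}(x')\bigr|^2<\varepsilon^2 .
\]
Since all norms on $\mathbb{R}^d$ are equivalent, the same conclusion holds for any norm, with a different constant in place of $\sqrt{d}$.

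The only substantive step is the premise $g_{\mathrm{NDT}} \in C^1$, which the statement defers to Appendix~\ref{NDT_C_infinite}. To make the proposal self-contained I would sketch it as follows. With the smooth activations $\sigma_{\gamma_1},\sigma_{\gamma_2}$ replacing $\tau$, the surrogate is
\[
g_{\mathrm{NDT}}(x)=(W^{\text{out}})^\top \sigma_{\gamma_2}\bigl(W^{(2)\top}\sigma_{\gamma_1}(W^{(1)}x+b^{(1)})+b^{(2)}\bigr)+b^{\text{out}}.
\]
This is a composition of affine maps with the coordinatewise function $\tanh(\gamma\,\cdot)$, which is $C^\infty$. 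Compositions of $C^\infty$ maps are $C^\infty$, so in particular $g_{\mathrm{NDT}}$ is $C^1$.

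The main point needing care is that smoothness holds only for the softened network: the hard-threshold version with $\tau$ is discontinuous, and its gradient is zero almost everywhere and undefined on the split hyperplanes. I would therefore state explicitly that $E$ is computed from the fine-tuned soft NDT with finite $\gamma_1,\gamma_2>0$. I would also remark that the resulting $\delta$ depends on $x$; the statement asks only for pointwise continuity, not uniform continuity. If uniform control were wanted, one would bound the Hessian of $g_{\mathrm{NDT}}$, which is globally bounded because $\tanh$ and its derivatives are bounded. That would make $E$ Lipschitz with a constant scaling like $\gamma_1^2$ and $\gamma_2^2$ times the weight norms.
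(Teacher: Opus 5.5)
Your proof is correct and is essentially the paper's own argument. $C^1$ regularity gives continuity of each partial derivative, hence of $E=\nabla g_{\mathrm{NDT}}$, and your choice of tolerance $\varepsilon/\sqrt{d}$ with $\delta=\min_m\delta_m$ simply writes out the $\varepsilon$--$\delta$ form that the paper leaves implicit. Your sketch of the $C^1$ premise differs slightly from Appendix~\ref{NDT_C_infinite}: the appendix writes $g$ in the sum-over-paths-of-products form \eqref{eq:ndt_def} and uses closure of $C^\infty$ under sums and products, whereas you use closure under composition for the layered $\tanh$ network of Section~\ref{sec:NDTs}. Both routes give $C^\infty$, and yours applies directly to the architecture that is actually fine-tuned.
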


\begin{proof}
Since $g_{\mathrm{NDT}} \in C^1(\mathbb{R}^d)$, its gradient exists and is continuous
on $\mathbb{R}^d$.
Therefore, the mapping $x \mapsto \nabla g_{\mathrm{NDT}}(x)$ is continuous,
which implies $E \in C^0(\mathbb{R}^d)$.
\end{proof}

The continuity of the explanation map ensures that small perturbations of the input produce small variations in the resulting explanations. This provides a formal notion of \emph{input-level stability} for NDT-based explanations, which is not guaranteed for non-differentiable surrogate models such as hard decision trees.

\inlinetitle{Stability of NDTs}{.}~%
We analyze the stability of explanations produced for a fixed input instance from the Digits dataset by repeating the surrogate training procedure with different random initializations.
The resulting stability matrix illustrated in \Fig{stability_matrix}, summarizes how feature attributions vary across runs.

This analysis captures algorithmic variability in the explanation process, arising from factors such as random sampling, model initialization, and non-convex optimization.
It is therefore complementary to the theoretical continuity guarantees established for NDTs, which concern stability with respect to input perturbations for a fixed surrogate model.
The observed variability reflects a fundamental bias-variance tradeoff in explanation models. Linear surrogates exhibit highly stable but overly simplistic explanations, while more expressive models may yield explanations that vary across runs despite improved fidelity. NDTs occupy an intermediate regime, combining smooth and expressive explanations with moderate variability.
The stability matrix provides an empirical perspective on explanation robustness that complements theoretical regularity results.

\section{Experiments} \label{sec:exps}

The experiments are focused on tabular data and conduct a series of experiments on eight benchmark datasets: Breast Cancer%
, Iris%
, Wine%
, Digits%
, Covertype%
, Diabetes%
, California Housing%
, and Ames Housing%
. 
The details of the datasets are provided in \Tab{tab:dataset_details} in Appendix~\ref{app:dataset-details}. \Tab{tab:dataset-list-literature} in the Appendix, puts in perspective the range of datasets used in this work and how it compares to related works.

The empirical results evaluating the effectiveness of the proposed \NDTLIME method are reported in \Tab{tab:stability-fidelity}. We compare against two baselines: the standard LIME using a linear regression surrogate (\LIME{LR}), and LIME using a traditional greedy DT regressor as surrogate (\LIME{DT}). Additional comparisons with several LIME variants from the literature are provided in \Fig{fig:lime_comparison} and in the Appendix~\ref{app:implementation-competitors} to further contextualize the performance of the proposed approach. For all experiments, the black-box model to be explained is a neural network with four hidden layers of sizes $512$, $256$, $128$, and $64$ neurons, which allows us to evaluate the interpretability of \NDTLIME on a complex, non-linear model.
In all experiments, the NDT's smoothing parameters were set to $\gamma_{1} = 1$ for the first layer and $\gamma_{2} = 1$ for the second layer, following the guideline of \cite{to_tree_or_not_to_tree} that $\gamma_{1}$ should be greater than or equal to $\gamma_{2}$ ($\gamma_{1} \geq \gamma_{2}$) to preserve meaningful split activations while allowing smoother leaf-level transitions.

An open source repository with implementations for the proposed \NDTLIME and the compared methods is publicly available\footnote{Accessible at: 
\url{https://github.com/aymen20002005/lime_ndt}
}.

\subsection{Comparing \NDTLIME to existing LIME Variants}

\begin{table*}[t]
\centering
\resizebox{0.8\textwidth}{!}{
\centerline{
  \begin{tabular}{l|ccc|ccc|ccc}
    \toprule
		\multirow{2}{*}{\textbf{Dataset}} & \multicolumn{3}{c}{\textbf{Stability}\ $\uparrow$}  & \multicolumn{3}{c}{\textbf{Fidelity}\ $\uparrow$} & \multicolumn{3}{c}{\textbf{Regularity}\ $\uparrow$}\\
    & \LIME{LR} & \LIME{DT} & \NDTLIME & \LIME{LR} & \LIME{DT} & \NDTLIME & \LIME{LR} & \LIME{DT} & \NDTLIME\\
    \midrule
    Breast Cancer & \textbf{0.997$\pm$0.003} & 0.986$\pm$0.103 & \textit{0.991$\pm$0.004} & 0.527$\pm$0.085 & \textit{0.686$\pm$0.049} & \textbf{0.785$\pm$0.031} & 0.812$\pm$0.014 & \textit{0.873$\pm$0.005} & \textbf{0.915$\pm$0.024}\\
    Iris & \textit{0.994$\pm$0.006} & \textbf{0.997$\pm$0.003} & 0.943$\pm$0.010 & 0.554$\pm$0.150 & \textit{0.777$\pm$0.037} & \textbf{0.860$\pm$0.021} & 0.743$\pm$0.017 & \textit{0.813$\pm$0.028} & \textbf{0.820$\pm$0.039}\\
    Wine & \textbf{0.999$\pm$0.001} & 0.997$\pm$0.001 & \textit{0.998$\pm$0.002} & 0.321$\pm$0.133 & \textit{0.395$\pm$0.183} & \textbf{0.518$\pm$0.131} & 0.830$\pm$0.019 & \textit{0.864$\pm$0.042} & \textbf{0.910$\pm$0.023}\\
    Digits & \textbf{0.980$\pm$0.009} & \textit{0.961$\pm$0.091} & 0.816$\pm$0.023 & 0.243$\pm$0.141 & \textit{0.440$\pm$0.076} & \textbf{0.577$\pm$0.106} & \textbf{0.654$\pm$0.021} & 0.512$\pm$0.084 & \textit{0.563$\pm$0.034}\\
    Covtype & \textbf{0.984$\pm$0.005} & \textit{0.983$\pm$0.108} & 0.931$\pm$0.007 & 0.362$\pm$0.052 & \textit{0.556$\pm$0.110} & \textbf{0.632$\pm$0.067} & \textit{0.890$\pm$0.012} & \textbf{0.926$\pm$0.020} & 0.849$\pm$0.072\\
	\midrule
    California Housing & \textbf{0.999$\pm$0.000} & \textbf{0.999$\pm$0.000} & \textit{0.973$\pm$0.001} & 0.297$\pm$0.148 & \textit{0.890$\pm$0.022} & \textbf{0.960$\pm$0.014} & \textit{0.834$\pm$0.009} & \textbf{0.864$\pm$0.011} & 0.794$\pm$0.033\\
    Diabetes & \textbf{0.999$\pm$0.001} & \textbf{0.999$\pm$0.001} & \textit{0.998$\pm$0.002} & \textit{0.886$\pm$0.012} & 0.562$\pm$0.124 & \textbf{0.920$\pm$0.035} & \textit{0.956$\pm$0.010} & 0.794$\pm$0.102 & \textbf{0.978$\pm$0.017}\\
    Ames Housing & \textbf{0.998$\pm$0.000}& \textit{0.996$\pm$0.041} & 0.990$\pm$0.013 & \textbf{0.865$\pm$0.030} & 0.506$\pm$0.062 & \textit{0.713$\pm$0.054} & \textbf{0.976$\pm$0.008} & \textit{0.903$\pm$0.012} & 0.893$\pm$0.027\\
    \bottomrule
  \end{tabular}%
}
}
\vspace{1em}
\caption{Stability, Fidelity and Regularity for \LIME{LR}, \LIME{DT}, and \NDTLIME methods on benchmark datasets. The stability of local explanations is measured as the average pairwise cosine similarity across repeated runs with random data perturbations. The fidelity of the involved surrogate models (\ie LR, DT, NDT) is measured as the $\mathrm{R}^2$ score between the surrogate predictions and the black-box model predictions on perturbed neighborhood samples. The regularity is computed as the average cosine similarity between the explanation of each test instance and the explanations of its $k$-NNs (with $k=2$), measuring how smoothly explanations evolve across nearby instances. Higher values indicate better approximation of the black-box model.
}
\label{tab:stability-fidelity}
\vspace{-2em}
\end{table*}

\inlinetitle{Measuring Stability}{.}~%
\Tab{tab:stability-fidelity} shows that all three surrogate models produce highly stable explanations, with the vast majority of scores exceeding $0.95$. The traditional \LIME{LR} and \LIME{DT} surrogates consistently yield extremely high stability scores, often approaching a perfect $1.0$. This is expected, as their simpler structures are less sensitive to minor variations in the local data sample, at the expense of faithfully modeling complex decision boundaries.

The proposed \NDTLIME method also demonstrates strong stability across the benchmark. On datasets such as Breast Cancer ($0.991$), Diabetes ($0.998$), Wine ($0.998$), and Ames Housing ($0.990$), stability scores are nearly indistinguishable from the baselines. Slightly lower stability is observed on California Housing ($0.973$), Iris ($0.943$), Digits ($0.816$), and Covtype ($0.931$).
This minor or small decrease in stability is not a sign of inferiority, but reflects the greater expressive power of the NDT. The gradient-based, global optimization of the NDT surrogate can capture subtle non-linear patterns in each random perturbation of the local data. As a result, explanations may vary slightly across runs while remaining valid. In fact, having slightly lower stability introduces more variance in the explanations, which can increase the chance of discovering the most faithful or insightful explanation for the underlying black-box model.

This is a critical distinction, as most other LIME variants shown in \Fig{fig:lime_comparison} retain a linear surrogate and instead focus on modifying the sampling or weighting process. Their near-perfect stability is often a byproduct of the surrogate's inflexibility. By fundamentally changing the surrogate model to an NDT, our method achieves a more meaningful balance between stability and expressiveness, as visually confirmed in the trade-off plots of \Fig{fig:lime_comparison}.

\inlinetitle{Measuring Fidelity}{.}~%
\label{Measuring_Fidelity}
As shown in \Tab{tab:stability-fidelity}, \NDTLIME consistently achieves higher fidelity scores compared to both \LIME{LR} and \LIME{DT} across most datasets. On California Housing and Diabetes, the Neural Decision Tree surrogate significantly outperforms the linear and greedy decision tree surrogates, reaching $\mathrm{R}^2$ scores above $0.9$. For Iris and Breast Cancer datasets, \NDTLIME still surpasses the other surrogates, though the fidelity remains below $0.9$, reflecting moderate local approximation accuracy. This indicates that NDTs can better capture non-linear local decision boundaries than traditional surrogates, even if the approximation is not perfect. For instance, on California Housing, \LIME{LR} achieves a modest fidelity of $0.297$, reflecting its limited ability to approximate highly non-linear patterns. In contrast, \NDTLIME reaches a nearly perfect fidelity ($0.960$), while maintaining interpretability through its tree-like structure. Similarly, for the Diabetes, \LIME{LR} performs well ($0.886$), but \NDTLIME remains competitive ($0.920$) and surpasses the decision tree surrogate ($0.562$).

The Wine, Covtype and Digits datasets represent challenging scenarios where all surrogates achieve relatively low fidelity, with $\mathrm{R}^2$ scores below $0.7$. This suggests that the Neural Networks’ decision boundaries in these datasets are intricate and difficult for simple surrogates to capture locally. Nonetheless, \NDTLIME performs better than \LIME{LR} and \LIME{DT}, highlighting its robustness even in difficult cases.
Overall, these results indicate that \NDTLIME provides a strong balance between interpretability and fidelity. By leveraging the expressive power of Neural Decision Trees, it achieves explanations that are generally faithful to the black-box model and more representative of its local decision boundaries compared to traditional surrogates. This performance gain stems directly from our choice of surrogate model, which is the most critical factor for achieving high fidelity. As illustrated in \Fig{fig:lime_comparison}, many competing LIME variants fail to reach high fidelity scores precisely because they are constrained by a simple linear surrogate. By replacing this core component with a more powerful NDT, NDT-LIME fundamentally overcomes this limitation, allowing it to capture complex, non-linear behaviors where other methods fail.

To further evaluate the robustness of \NDTLIME, we also conducted experiments to assess how local fidelity is impacted by the complexity of the global black-box model. We systematically increased the complexity of the Neural Network used as black-box model and measured the fidelity of \NDTLIME. The results, detailed in Appendix~\ref{sec:fidelity_complexity}, demonstrate that \NDTLIME maintains significantly higher local fidelity as the complexity of the black-box model increases.

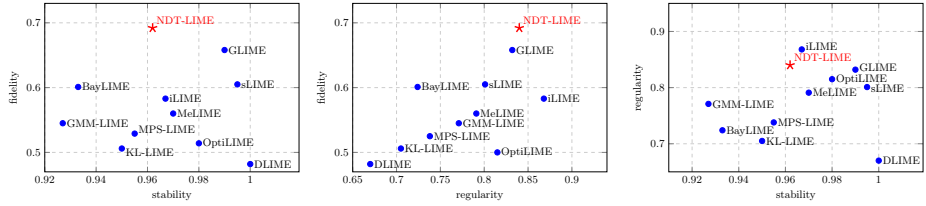
\begin{figure*}[t!]
\centering
\begin{minipage}[t]{0.32\textwidth}
\centering
\resizebox{\linewidth}{!}{%
\begin{tikzpicture}
    \begin{axis}[
        width=9cm,
        height=6.5cm,
        xlabel={stability},
        ylabel={fidelity},
        xmin=0.92, xmax=1.019,
        ymin=0.47, ymax=0.73,
        grid=both,
        major grid style={dashed, gray!40},
        minor grid style={dotted, gray!20},
        tick label style={font=\small},
        label style={font=\small},
        legend style={at={(0.03,0.97)}, anchor=north west, draw=none, fill=white, font=\small},
    ]
    \addplot[only marks, mark=star, mark size=4pt, very thick, color=red] coordinates {(0.962, 0.692)};
    \node[above right, font=\small, text=red] at (axis cs:0.962,0.692) {NDT-LIME};

    \addplot[only marks, mark=*, mark size=2.2pt, color=blue] coordinates {
        (0.990, 0.658) (1, 0.482) (0.955, 0.529) (0.95, 0.506)
        (0.970, 0.560) (0.995, 0.605) (0.967, 0.583)
        (0.933, 0.601) (0.980, 0.514) (0.927, 0.545)
    };

    \node[right] at (axis cs:0.990,0.658) {\small GLIME};
    \node[right] at (axis cs:1,0.482) {\small DLIME};
    \node[right] at (axis cs:0.955,0.535) {\small MPS-LIME};
    \node[right] at (axis cs:0.95,0.500) {\small KL-LIME};
    \node[right] at (axis cs:0.970,0.560) {\small MeLIME};
    \node[right] at (axis cs:0.995,0.605) {\small sLIME};
    \node[right] at (axis cs:0.967,0.583) {\small iLIME};
    \node[right] at (axis cs:0.933,0.601) {\small BayLIME};
    \node[right] at (axis cs:0.980,0.514) {\small OptiLIME};
    \node[right] at (axis cs:0.927,0.545) {\small GMM-LIME};
    \end{axis}
\end{tikzpicture}%
}
\end{minipage}%
\hfill
\begin{minipage}[t]{0.32\textwidth} %
\centering
\resizebox{\linewidth}{!}{%
\begin{tikzpicture}
\begin{axis}[
    width=9cm,
    height=6.5cm,
    xlabel={regularity},
    ylabel={fidelity},
    xmin=0.65, xmax=0.94,
    ymin=0.47, ymax=0.73,
    grid=both,
    major grid style={dashed, gray!40},
    minor grid style={dotted, gray!20},
    tick label style={font=\small},
    label style={font=\small},
    legend style={at={(0.03,0.97)}, anchor=north west, draw=none, fill=white, font=\small},
]
\addplot[only marks, mark=star, mark size=4pt, very thick, color=red]
coordinates {(0.840, 0.692)};
\node[above right, font=\small, text=red] at (axis cs:0.840,0.692) {NDT-LIME};

\addplot[only marks, mark=*, mark size=2.2pt, color=blue] coordinates {
    (0.832, 0.658) (0.67, 0.482) (0.738, 0.525) (0.705, 0.506)
    (0.791, 0.560) (0.801, 0.605) (0.868, 0.583)
    (0.724, 0.601) (0.815, 0.5) (0.771, 0.545)
};

\node[right] at (axis cs:0.832,0.658) {\small GLIME};
\node[right] at (axis cs:0.67,0.482) {\small DLIME};
\node[right] at (axis cs:0.738,0.525) {\small MPS-LIME};
\node[right] at (axis cs:0.705,0.506) {\small KL-LIME};
\node[right] at (axis cs:0.791,0.560) {\small MeLIME};
\node[right] at (axis cs:0.801,0.605) {\small sLIME};
\node[right] at (axis cs:0.868,0.583) {\small iLIME};
\node[right] at (axis cs:0.724,0.601) {\small BayLIME};
\node[right] at (axis cs:0.815,0.500) {\small OptiLIME};
\node[right] at (axis cs:0.771,0.545) {\small GMM-LIME};

\end{axis}
\end{tikzpicture}%
}
\end{minipage}
\hfill
\begin{minipage}[t]{0.32\textwidth} %
\centering
\resizebox{\linewidth}{!}{%
\begin{tikzpicture}
    \begin{axis}[
        width=9cm,
        height=6.5cm,
        xlabel={stability},
        ylabel={regularity},
        xmin=0.91, xmax=1.019,
        ymin=0.65, ymax=0.95,
        grid=both,
        major grid style={dashed, gray!40},
        minor grid style={dotted, gray!20},
        tick label style={font=\small},
        label style={font=\small},
        legend style={at={(0.03,0.97)}, anchor=north west, draw=none, fill=white, font=\small},
    ]
    \addplot[only marks, mark=star, mark size=4pt, very thick, color=red] coordinates {(0.962, 0.840)};
    \node[above right, font=\small, text=red] at (axis cs:0.962,0.840) {NDT-LIME};

    \addplot[only marks, mark=*, mark size=2.2pt, color=blue] coordinates {
        (0.990, 0.832) (1, 0.67) (0.955, 0.738) (0.95, 0.705)
        (0.970, 0.791) (0.995, 0.801) (0.967, 0.868)
        (0.933, 0.724) (0.980, 0.815) (0.927, 0.771)
    };

    \node[right] at (axis cs:0.990,0.837) {\small GLIME};
    \node[right] at (axis cs:1,0.67) {\small DLIME};
    \node[right] at (axis cs:0.955,0.738) {\small MPS-LIME};
    \node[right] at (axis cs:0.95,0.705) {\small KL-LIME};
    \node[right] at (axis cs:0.970,0.791) {\small MeLIME};
    \node[right] at (axis cs:0.995,0.798) {\small sLIME};
    \node[right] at (axis cs:0.967,0.874) {\small iLIME};
    \node[right] at (axis cs:0.933,0.722) {\small BayLIME};
    \node[right] at (axis cs:0.980,0.815) {\small OptiLIME};
    \node[right] at (axis cs:0.927,0.771) {\small GMM-LIME};
    \end{axis}
\end{tikzpicture}%
}
\end{minipage}%
\caption{Visual summary of the results comparing \LIME{NDT} with other LIME variants in terms of stability-vs-fidelity, regularity-vs-fidelity, and stability-vs-regularity. The results in tabilar format are presented in Tabs.\,\ref{tab:fidelity_lime_variants}-\ref{tab:regularity_lime_variants}.}
\label{fig:lime_comparison}
\end{figure*}

\inlinetitle{Measuring Regularity}{.}~%
As defined in \Sec{sec:quality-measures}, regularity evaluates whether similar inputs yield similar explanation vectors, an essential property for human interpretability and trust.
For this experiment, we compute the regularity using a $k$-NN approach. For each test instance $x_{i}$, we identify its $k$ closest neighbors in the original feature space using Euclidean distance, and compute the cosine similarity between the explanation vector $E(x_{i})$ and those of its neighbors, as formalized in \Eq{eq:local_regularity}. We choose $k=2$ to focus the evaluation on very local coherence, ensuring that the metric captures genuine smoothness rather than global averaging effects. This is particularly important in the context of local explanation methods such as LIME, whose objective is to faithfully describe model behavior in a tightly constrained neighborhood. A more detailed exploration of how regularity behaves across different neighborhood sizes (\ie for varying values of k) is provided in the Appendix~\ref{Analysis of Regularity in Different Local Regions}.

The regularity results in \Tab{tab:stability-fidelity}, reveal several notable trends. \NDTLIME achieves the highest or near-highest regularity scores on the majority of datasets, indicating that its explanations vary more smoothly across neighboring datapoints than those produced by traditional surrogates.
On Breast Cancer, Wine, and Diabetes, \NDTLIME exhibits clear gains in regularity, reaching scores of $0.915$, $0.910$, and $0.978$ respectively. These improvements suggest that the soft, differentiable splits of the NDTs promote coherent transitions between nearby explanations, in contrast to the potentially abrupt changes induced by hard, axis-aligned splits in DTs %
or the overly rigid structure of linear models.
For California Housing, \NDTLIME achieves slightly lower regularity ($0.794$) compared to \LIME{LR} ($0.834$) and \LIME{DT} ($0.864$). This reflects a known trade-off: while NDTs capture more complex local non-linearities leading to very high fidelity, this expressiveness can introduce subtle variations in explanation vectors even among very close neighbors. Importantly, this does not indicate incoherent behavior, but rather a more faithful adaptation to local curvature in the black-box model’s decision boundaries.

\section{Conclusion}
In this work, we introduced \NDTLIME, a novel approach that enhances the LIME methodology by integrating Neural DTs as surrogates. Through extensive experiments on multiple benchmark tabular datasets, we demonstrated that \NDTLIME achieves a strong balance between stability and fidelity.

Overall, the findings confirm that NDTs are powerful surrogates for enhancing LIME, providing explanations that are at once interpretable, stable, and faithful. This synergy bridges the gap between black-box predictive accuracy and transparent decision-making, opening new perspectives for explainable AI in critical domains such as healthcare, finance, and policy-making.

As highlighted in the recent work in \cite{which_lime}, ensuring the trustworthiness and consistency of LIME-based explanations remains a central challenge for the eXplainable AI community. Future work may focus on extending \NDTLIME to other types of data beyond tabular formats, such as images or text, as well as investigating hybrid strategies that combine local explanations with global interpretability frameworks. By doing so, we aim to further strengthen the trustworthiness and applicability of machine learning in real-world, high-stakes scenarios.

\bibliographystyle{IEEEtran}
\input{main.bbl}

\newpage
\section*{APPENDIX}

\section{Analysis of Local Surrogate Fidelity Against Global Model Complexity}\label{sec:fidelity_complexity}

To demonstrate the robustness and necessity of the NDT as a surrogate model, we investigated how local explanation fidelity is impacted by the complexity of the global black-box model. We trained a series of Multi-Layer Perceptrons (MLPs) on five benchmark datasets, systematically increasing complexity by adding hidden layers. Fidelity is measured using the $\mathrm{R}^2$ score of the local surrogate model's predictions against the complex model's predictions in the local neighborhood. \Fig{fig:all_datasets}, presented as eight subplots, illustrates the fidelity of \LIME{LR}, \LIME{DT}, and \NDTLIME as the global model complexity increases for five distinct datasets. Across all datasets, a clear and consistent pattern emerges: \NDTLIME maintains significantly higher local fidelity as the complexity of the black-box model increases, while the fidelity of traditional surrogates (\LIME{LR} and \LIME{DT}) declines drastically or collapses.

In all figures, \LIME{LR} shows the steepest decline in fidelity. For models with $3$ or more hidden layers, the $\mathrm{R}^2$ score often drops below $0.2$, or even approaches zero. This confirms that linear models are fundamentally incapable of approximating the highly non-linear, multi-dimensional decision boundaries created by deep, multi-layer networks. The local explanation derived from \LIME{LR} quickly becomes unfaithful and untrustworthy as the global model gains complexity.

\LIME{DT} performs marginally better than \LIME{LR} initially, as its hierarchical structure offers more flexibility than a simple hyperplane. However, its fidelity rapidly plateaus or drops off after the model reaches moderate complexity ($2$-$3$ layers). The rigid, axis-aligned splits of traditional decision trees cannot capture the smooth, curved, or high-dimensional manifolds generated by deeper MLPs, causing its local fidelity to fail to keep pace with the increasing complexity of the target black-box function.

\NDTLIME consistently outperforms both baseline surrogates. Even when the global MLP model reaches maximum complexity, \NDTLIME maintains a strong $\mathrm{R}^2$ score. This sustained performance is directly attributable to the core innovation of NDTs: the use of soft, differentiable splits. These soft splits allow the NDT to smoothly approximate complex decision boundaries without the limitations imposed by the rigid axis-aligned cuts of traditional decision trees, ensuring the local explanation remains faithful even for the most complex black-box models.
\begin{figure*}[t]
    \centering
			  \newcommand{\subfigsize}{0.24}
    \centering    
    \begin{subfigure}{\subfigsize\textwidth}
			\centering
      \includegraphics[width=\textwidth]{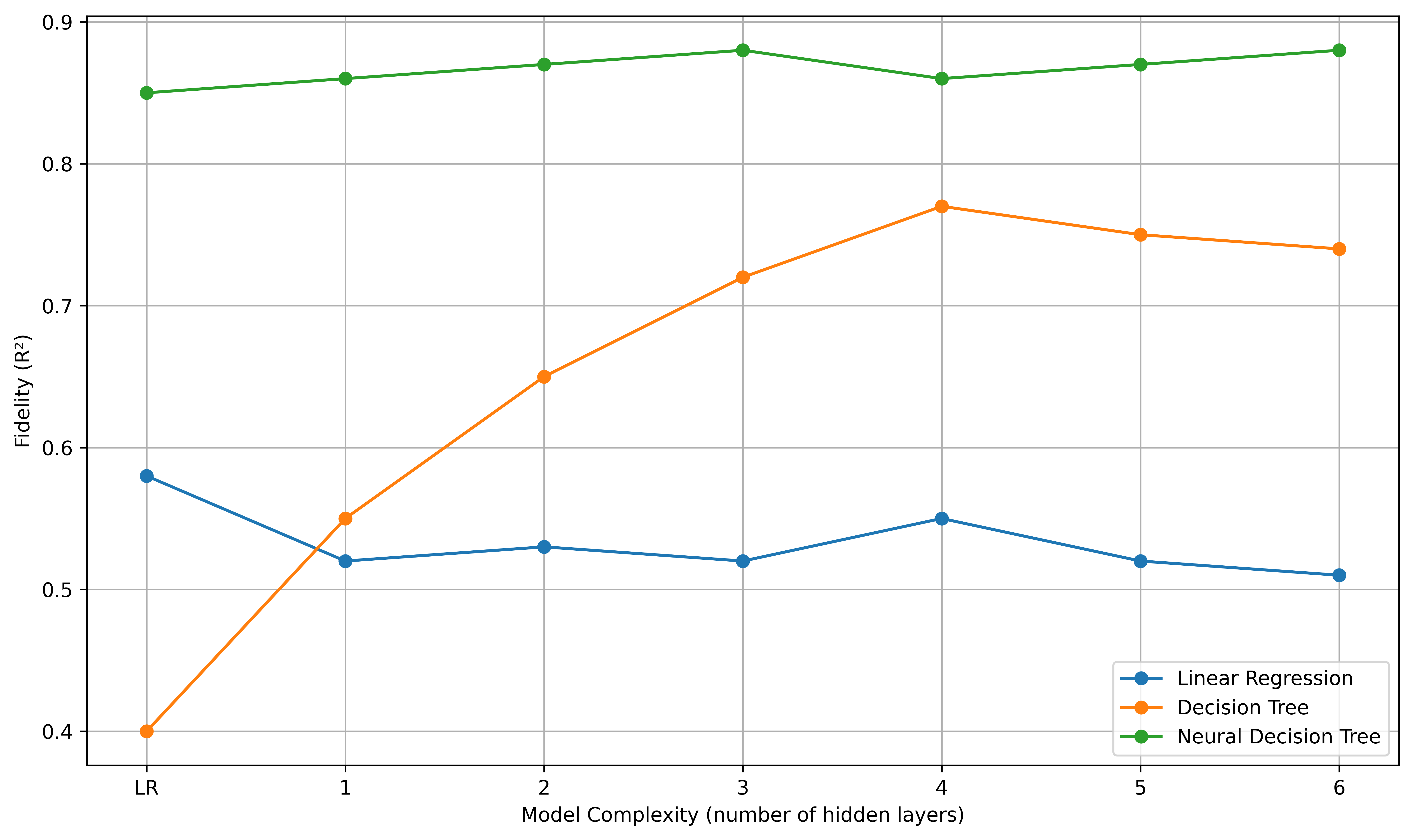}
			\caption{Iris}
			\label{subfig:Fidelity-Iris}
		\end{subfigure}%
    \hfill%
    \begin{subfigure}{\subfigsize\textwidth}
			\centering
      \includegraphics[width=\textwidth]{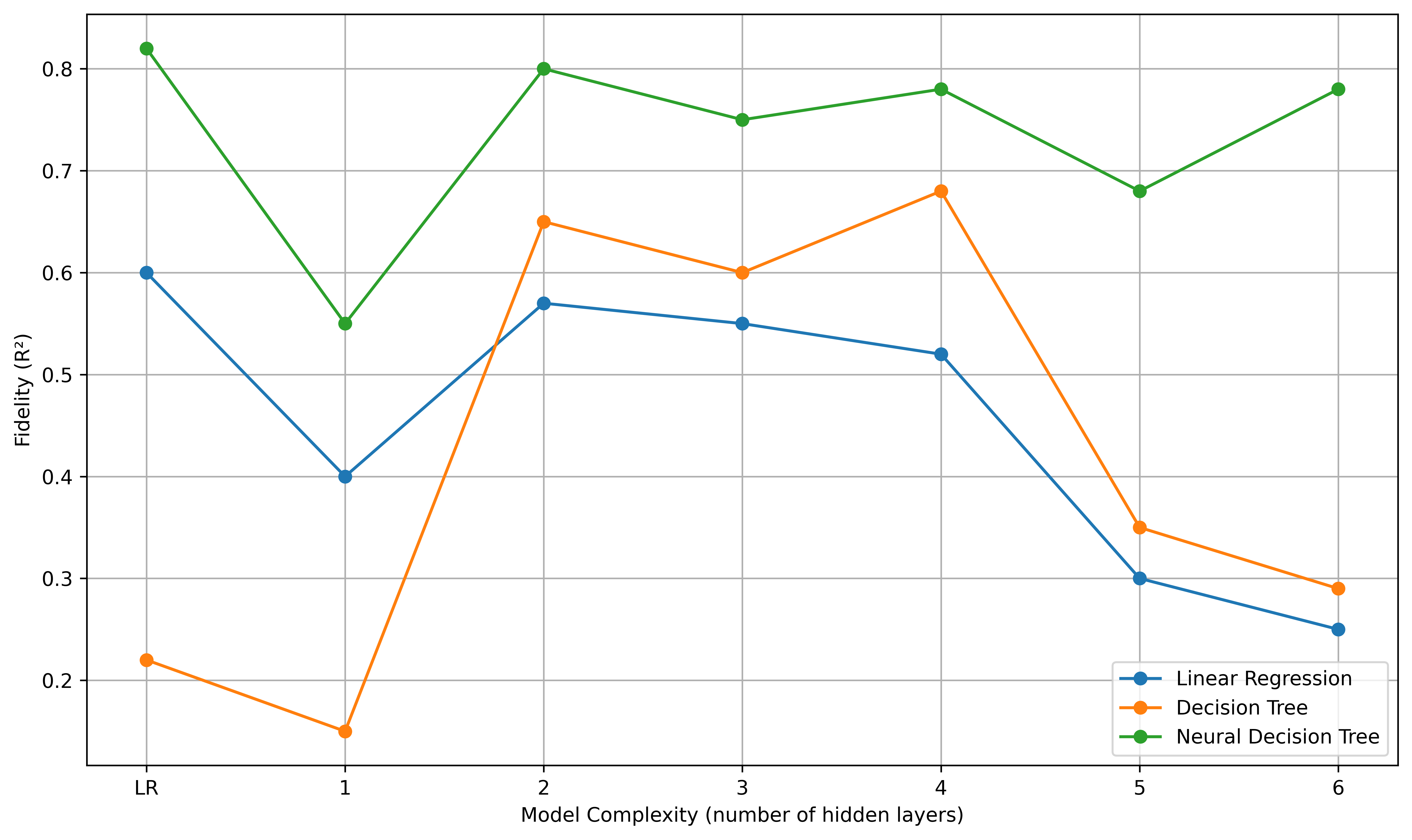}
			\caption{Breast Cancer}
			\label{subfig:Fidelity-breast_cancer}
		\end{subfigure}%
		\hfill
    \begin{subfigure}{\subfigsize\textwidth}
			\centering
      \includegraphics[width=\textwidth]{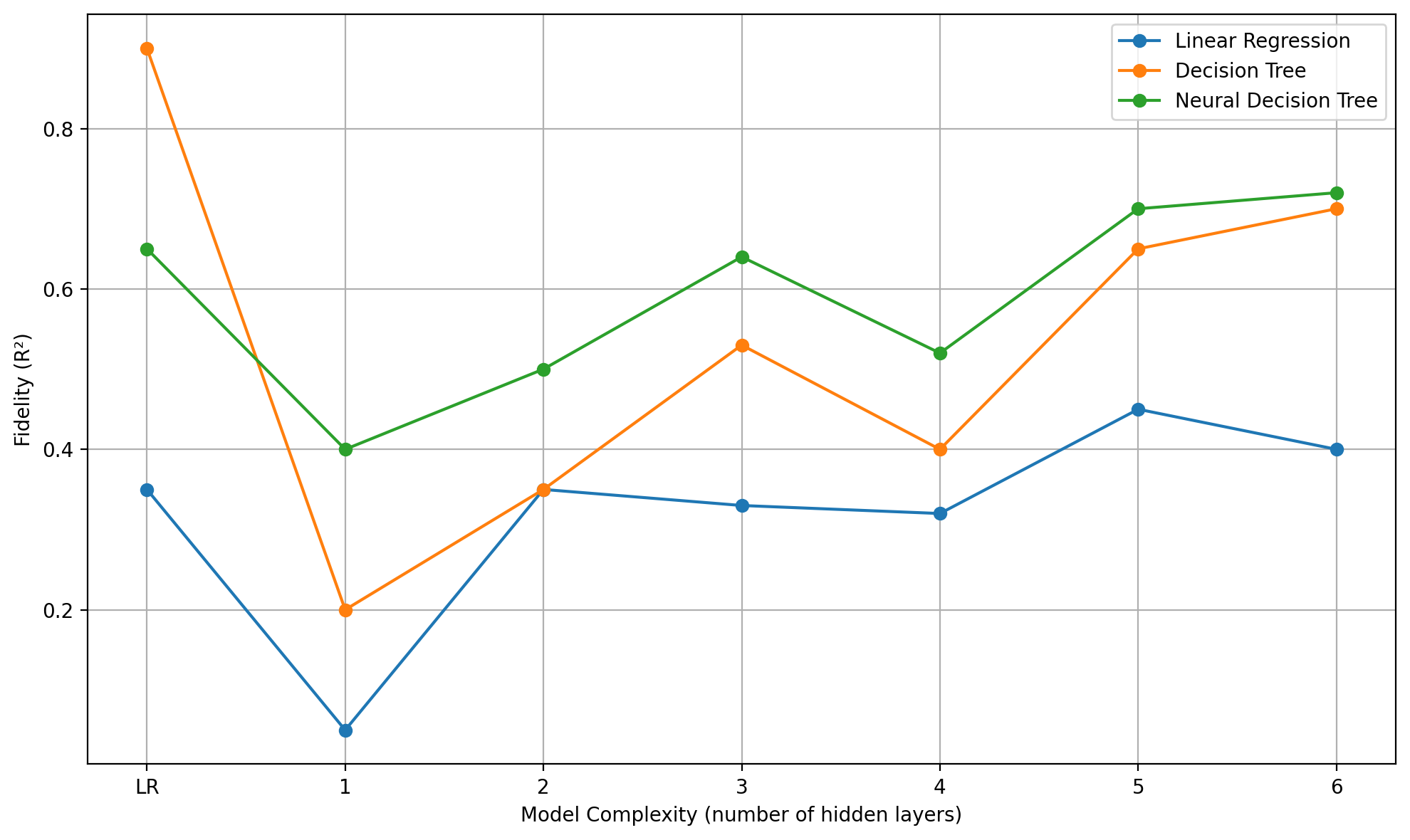}
			\caption{Wine}
			\label{subfig:Fidelity-wine}
		\end{subfigure}%
			\hfill
    \begin{subfigure}{\subfigsize\textwidth}
			\centering
      \includegraphics[width=\textwidth]{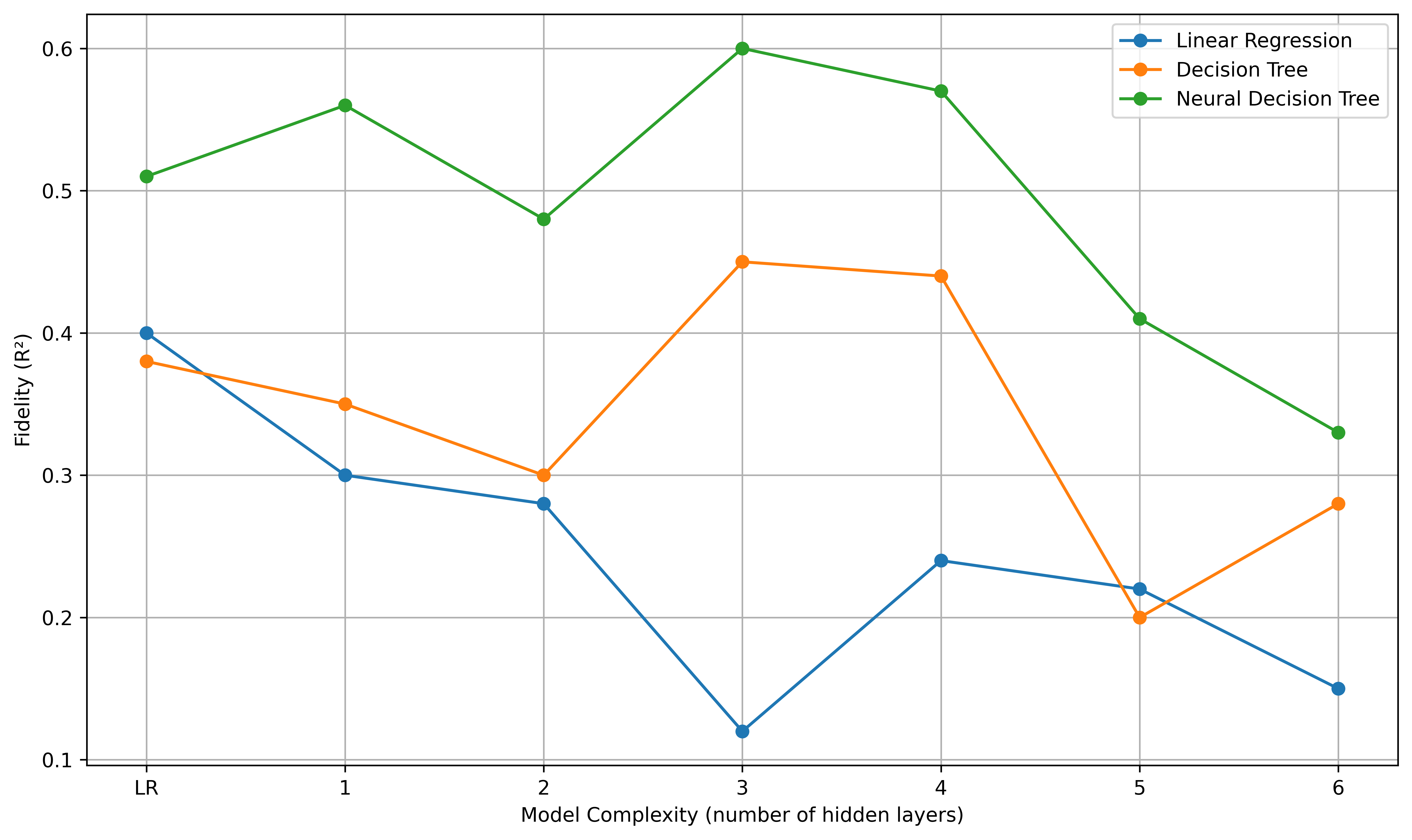}
			\caption{Digits}
			\label{subfig:Fidelity-digits}
		\end{subfigure}%
		
    \vspace{1em} %
    
		\begin{subfigure}{\subfigsize\textwidth}
			\centering
      \includegraphics[width=\textwidth]{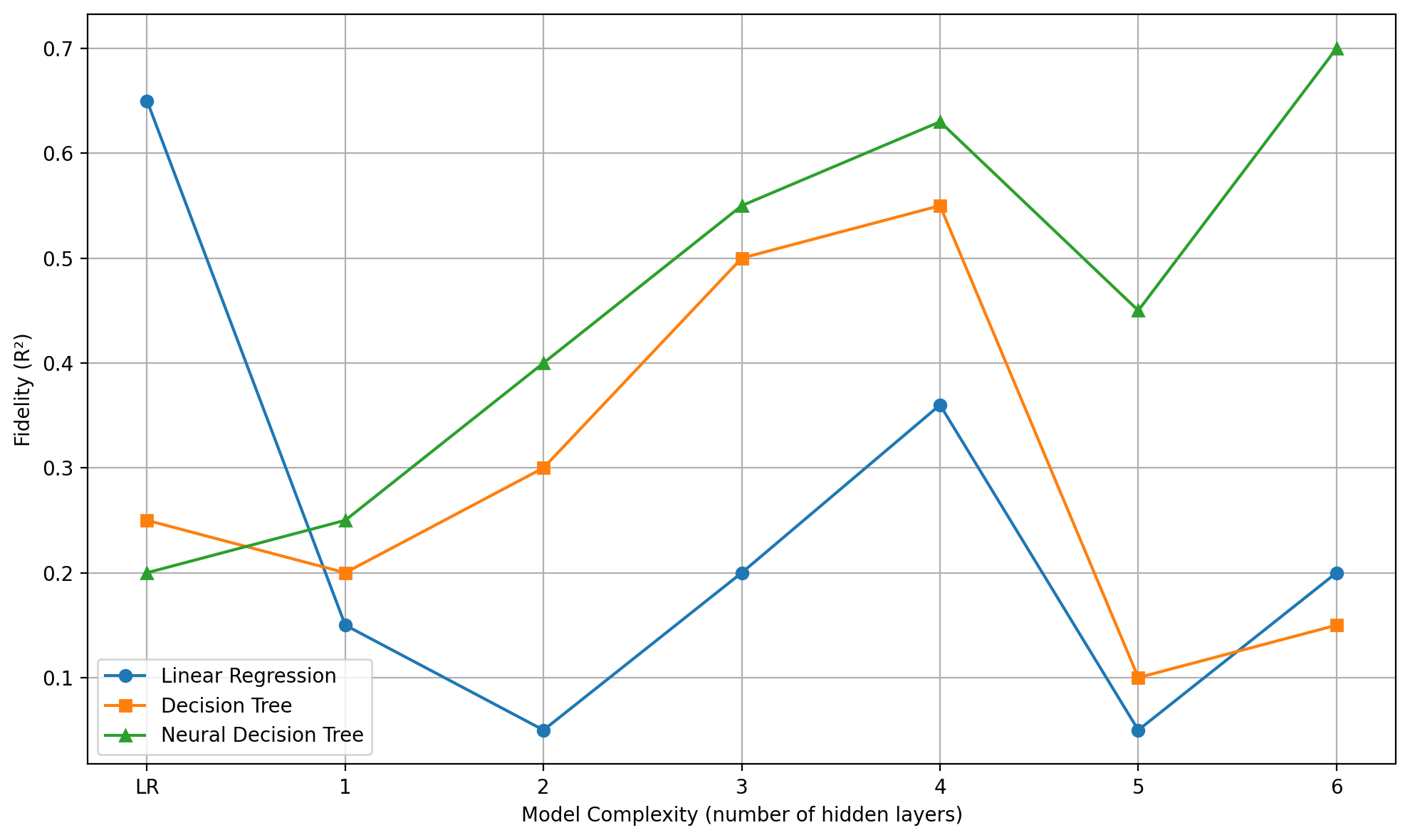}
			\caption{Covertype}
			\label{subfig:Fidelity-covertype}
		\end{subfigure}%
		\hfill
		\begin{subfigure}{\subfigsize\textwidth}
			\centering
      \includegraphics[width=\textwidth]{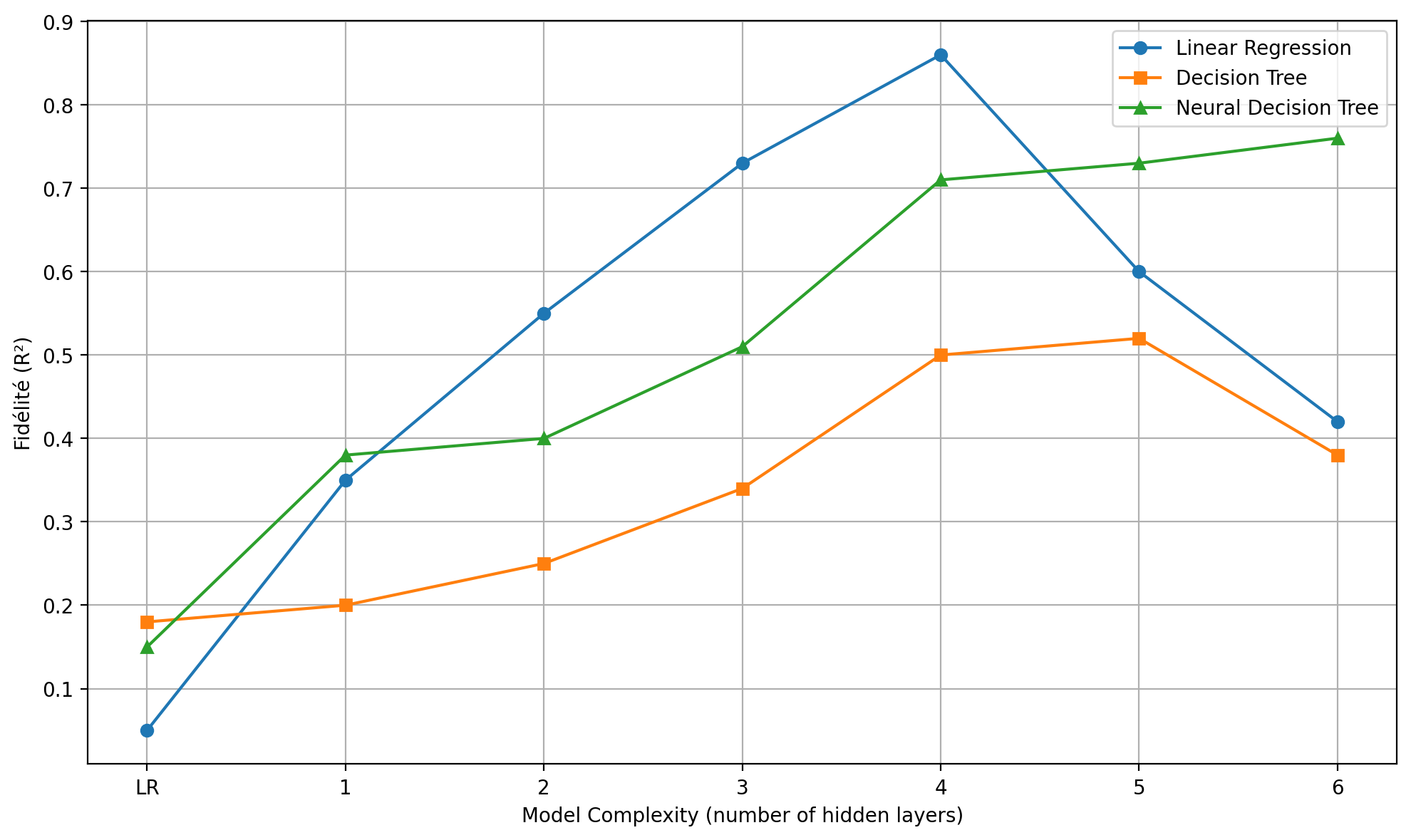}
			\caption{Diabetes}
			\label{subfig:Fidelity-diabetes}
		\end{subfigure}%
    \hfill
		\begin{subfigure}{\subfigsize\textwidth}
			\centering
      \includegraphics[width=\textwidth]{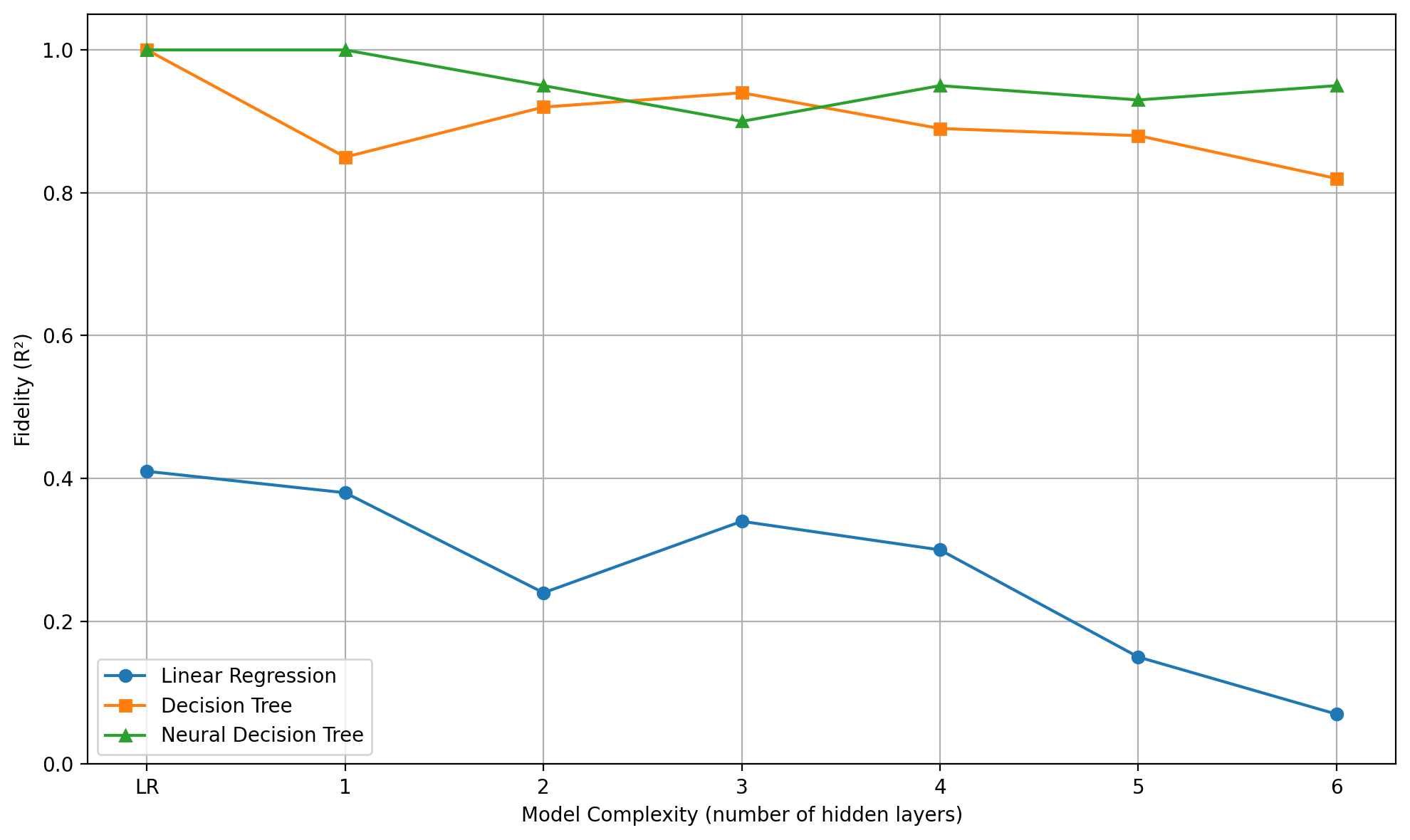}
			\caption{California Housing}
			\label{subfig:Fidelity-calif-housing}
		\end{subfigure}%
		\hfill
		\begin{subfigure}{\subfigsize\textwidth}
			\centering
      \includegraphics[width=\textwidth]{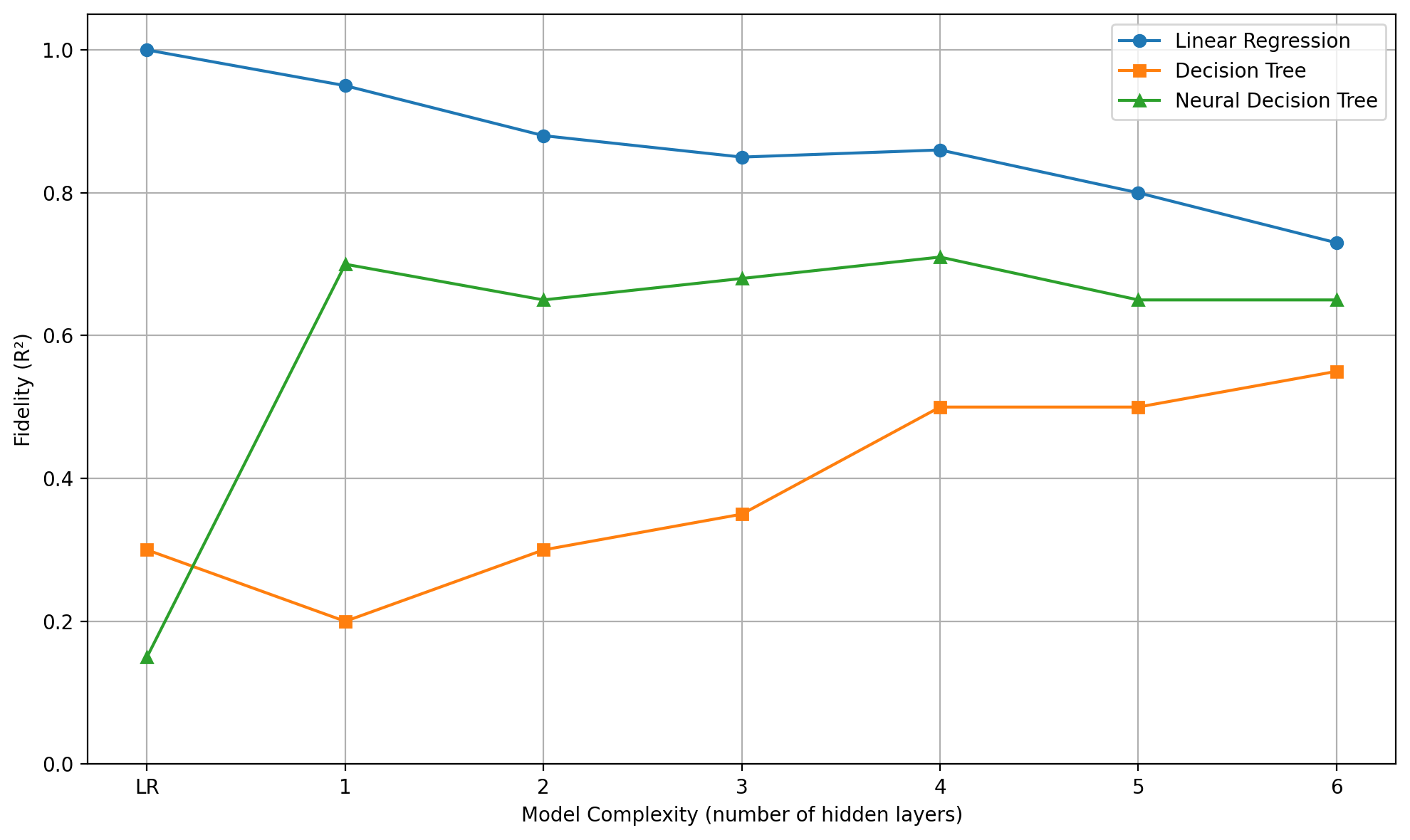}
			\caption{Ames Housing}
			\label{subfig:Fidelity-ames-housing}
		\end{subfigure}%
    \caption{Local fidelity comparison of LIME surrogates (\LIME{LR}, \LIME{DT}, and \NDTLIME) across increasing complexity of black-box models on benchmark datasets.}
    \label{fig:all_datasets}
\end{figure*}

\section{Analysis of Regularity in Different Local Regions}
\label{Analysis of Regularity in Different Local Regions}
The regularity score, defined by averaging the score of \Eq{eq:local_regularity} overall datapoints of a dataset, depends critically on the choice of $k$, the number of nearest neighbors used to assess the coherence of explanations across the input space. While small values of $k$ capture strictly local smoothness, larger values progressively include instances that may lie in different local regimes of the black-box decision function.

For inherently local explanation methods such as LIME, high regularity is expected to hold within a limited neighborhood. However, the way regularity decreases as $k$ increases reveals important differences between surrogate models, particularly their ability to handle decision boundaries and local variations.

To analyze this behavior, we vary $k$ from $1$ to $10$ and compute the global regularity score for \LIME{LR}, \LIME{DT}, and \NDTLIME across all benchmark datasets.

\begin{figure*}[t!]
	  \newcommand{\subfigsize}{0.24}
    \centering    
    \begin{subfigure}{\subfigsize\textwidth}
			\centering
      \includegraphics[width=\textwidth]{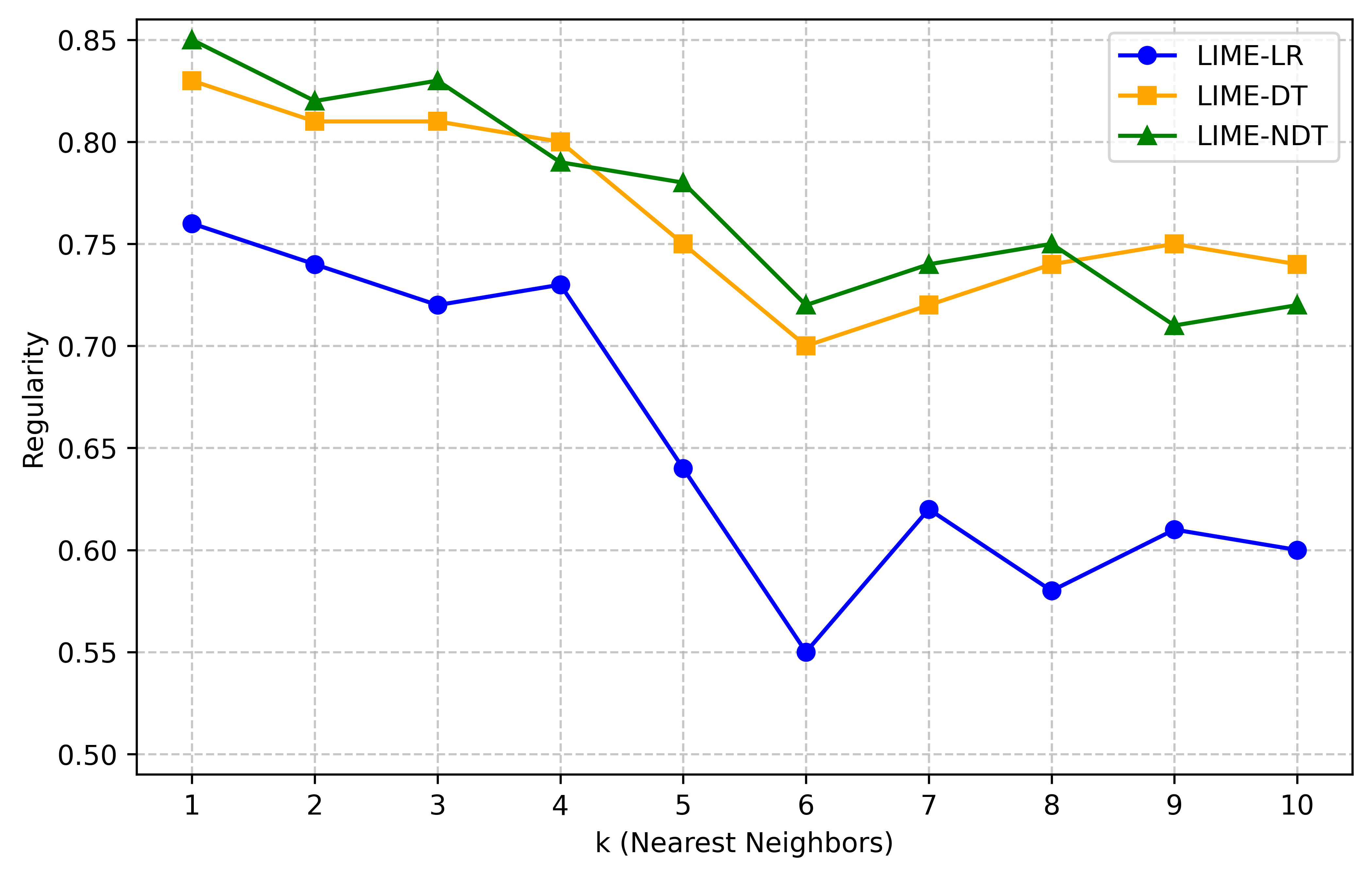}
			\caption{Iris}
			\label{subfig:Regularity-Iris}
		\end{subfigure}%
    \hfill%
    \begin{subfigure}{\subfigsize\textwidth}
			\centering
      \includegraphics[width=\textwidth]{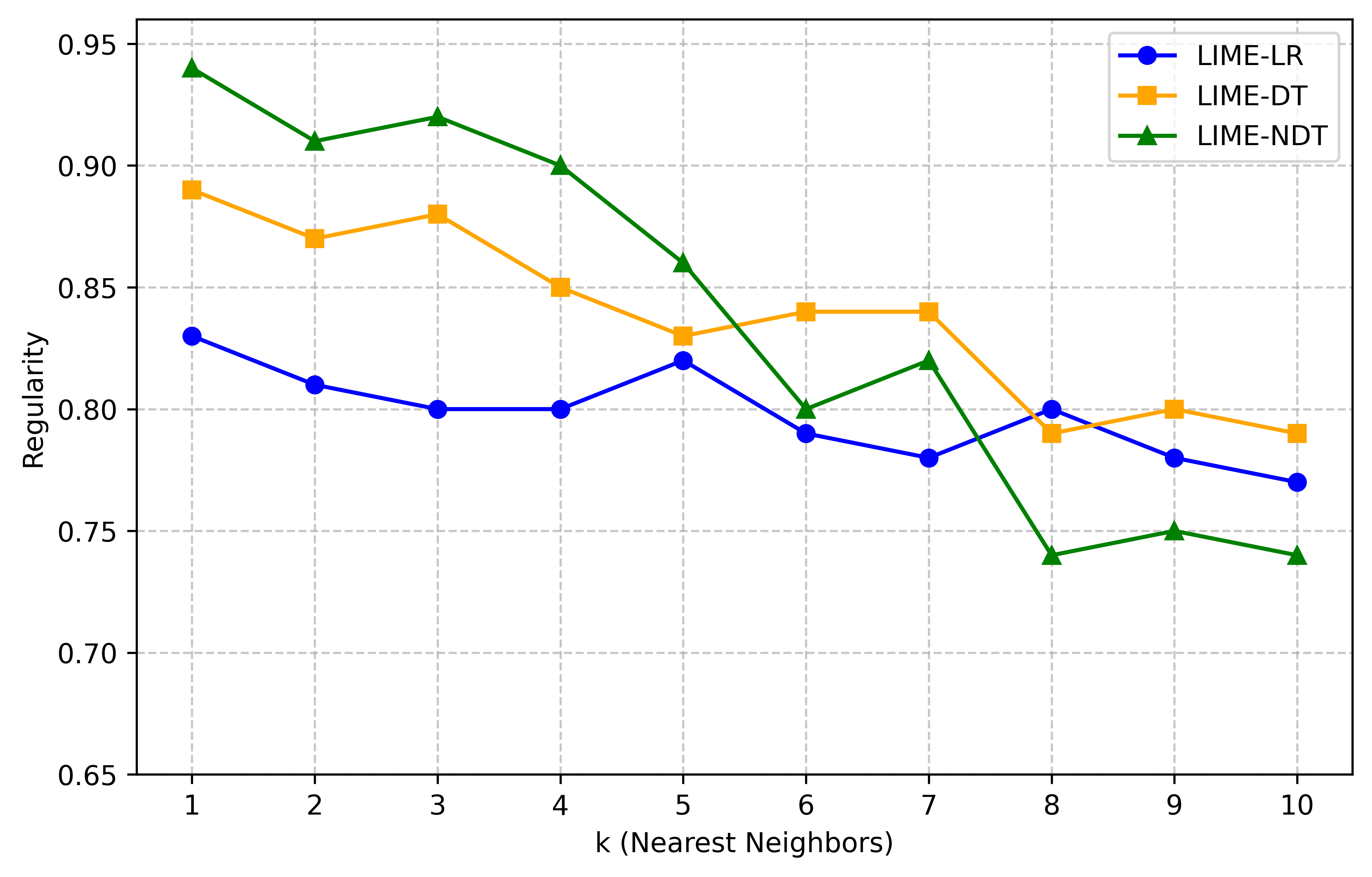}
			\caption{Breast Cancer}
			\label{subfig:Regularity-breast_cancer}
		\end{subfigure}%
		\hfill
    \begin{subfigure}{\subfigsize\textwidth}
			\centering
      \includegraphics[width=\textwidth]{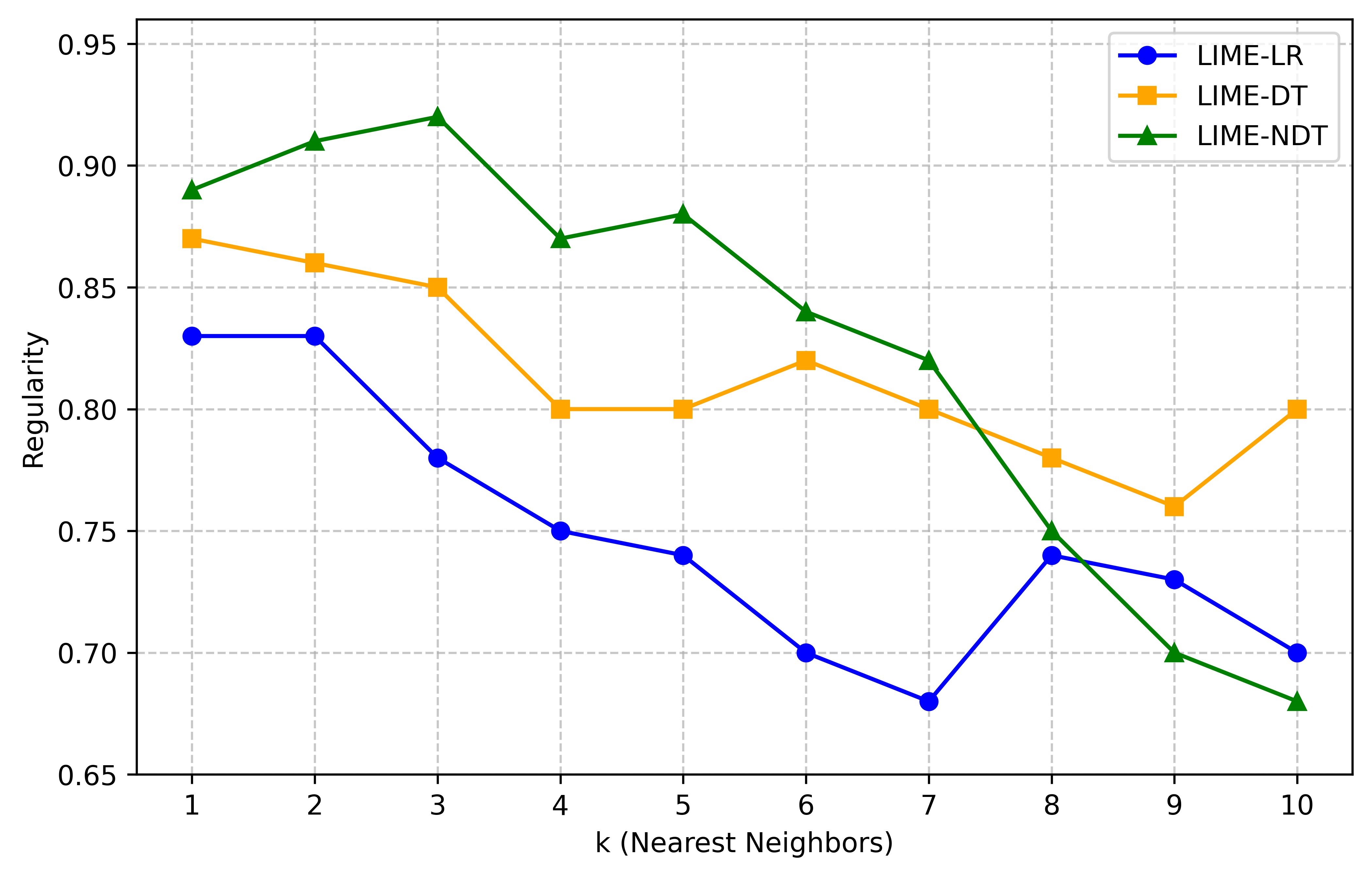}
			\caption{Wine}
			\label{subfig:Regularity-wine}
		\end{subfigure}%
			\hfill
    \begin{subfigure}{\subfigsize\textwidth}
			\centering
      \includegraphics[width=\textwidth]{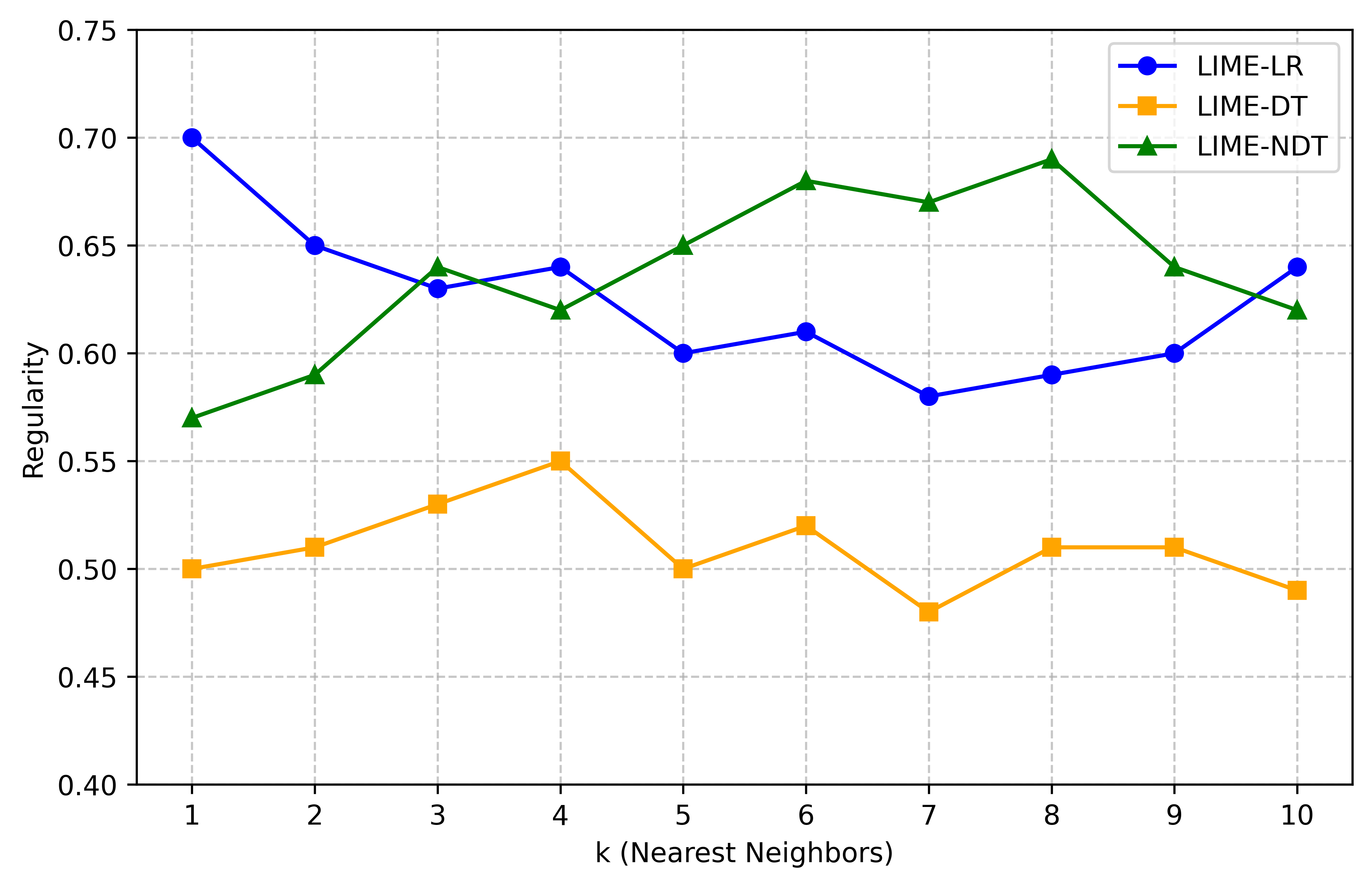}
			\caption{Digits}
			\label{subfig:Regularity-digits}
		\end{subfigure}%
		
    \vspace{1em} %
    
		\begin{subfigure}{\subfigsize\textwidth}
			\centering
      \includegraphics[width=\textwidth]{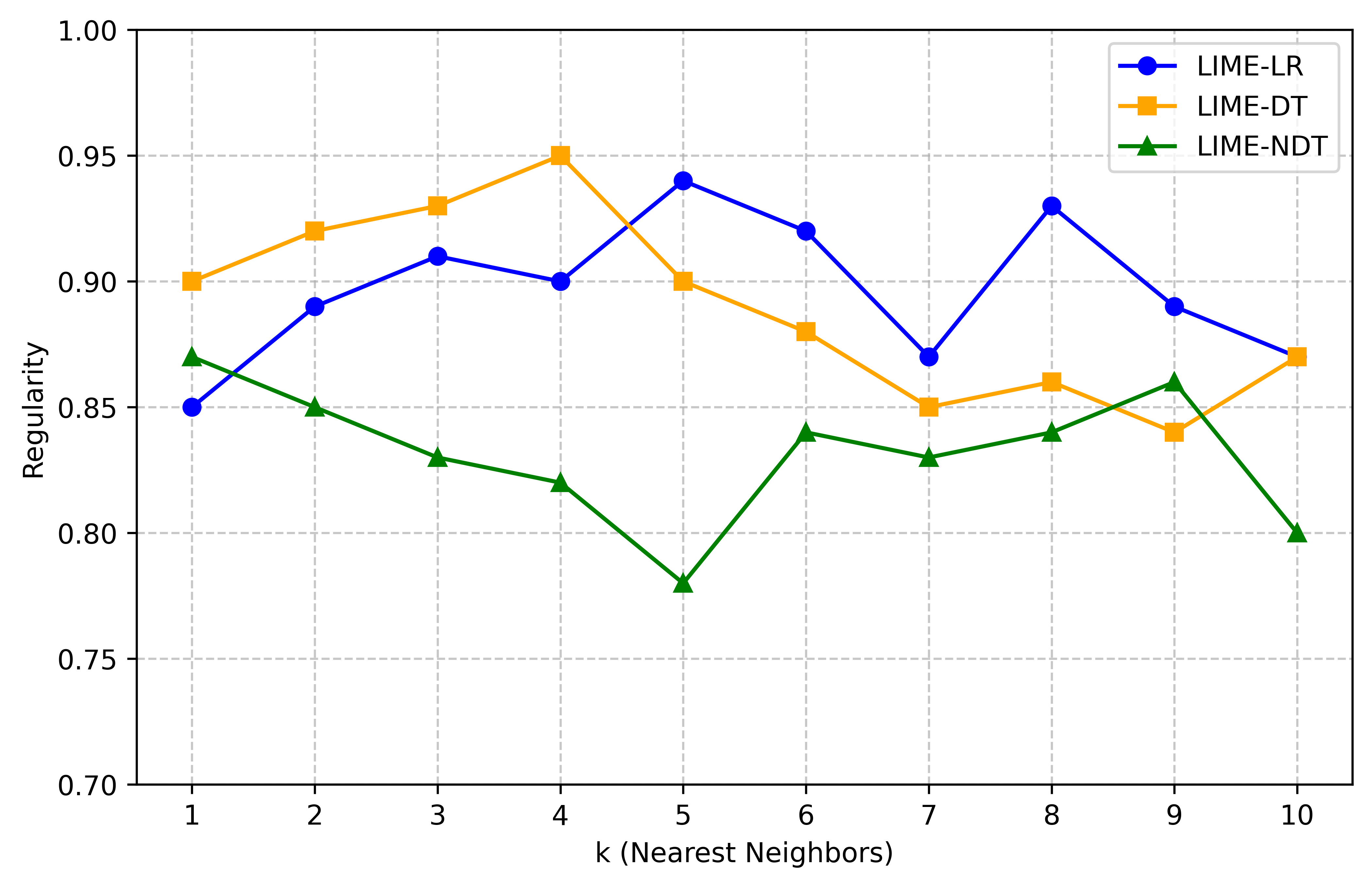}
			\caption{Covertype}
			\label{subfig:Regularity-covertype}
		\end{subfigure}%
		\hfill
		\begin{subfigure}{\subfigsize\textwidth}
			\centering
      \includegraphics[width=\textwidth]{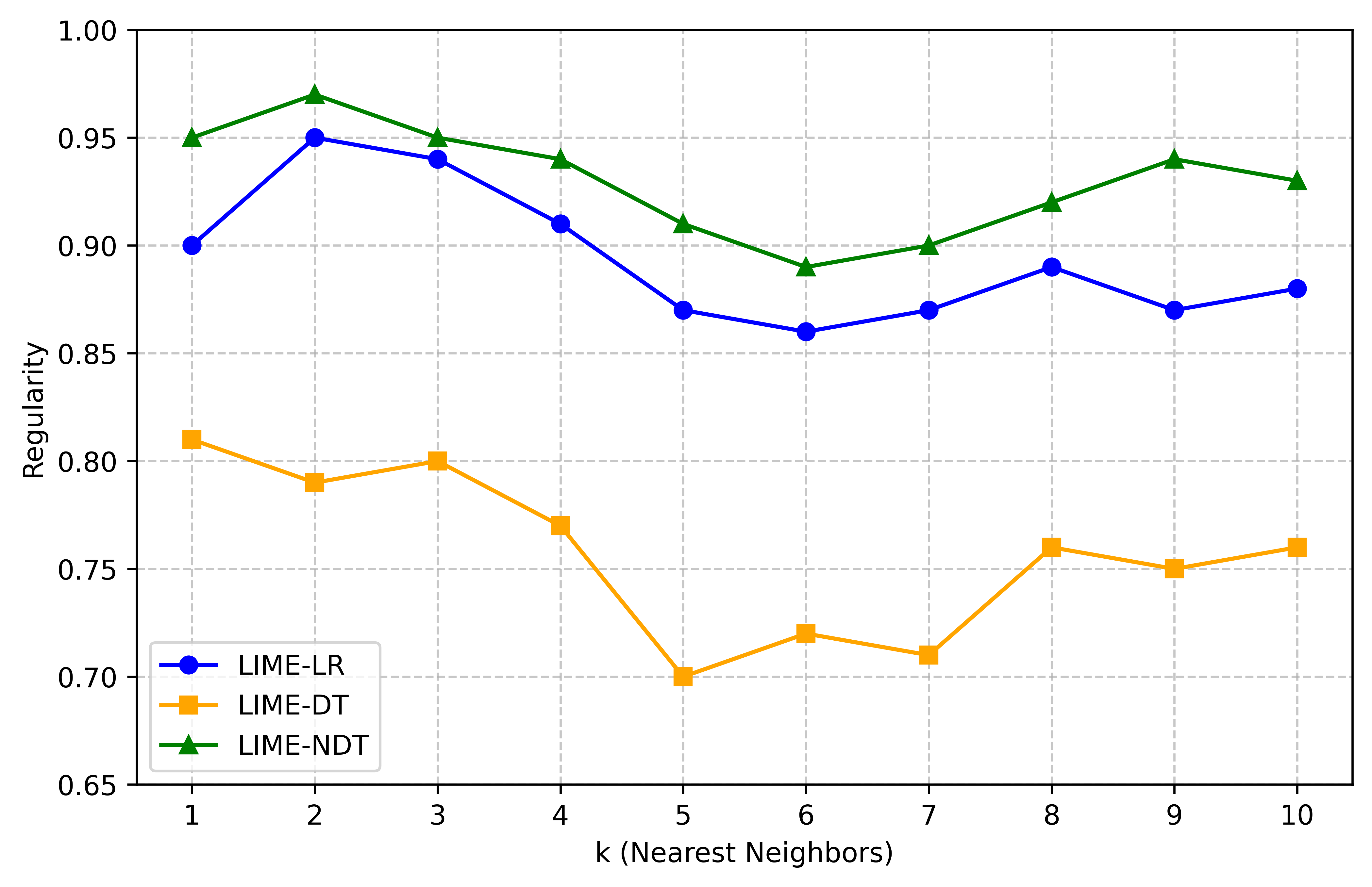}
			\caption{Diabetes}
			\label{subfig:Regularity-diabetes}
		\end{subfigure}%
    \hfill
		\begin{subfigure}{\subfigsize\textwidth}
			\centering
      \includegraphics[width=\textwidth]{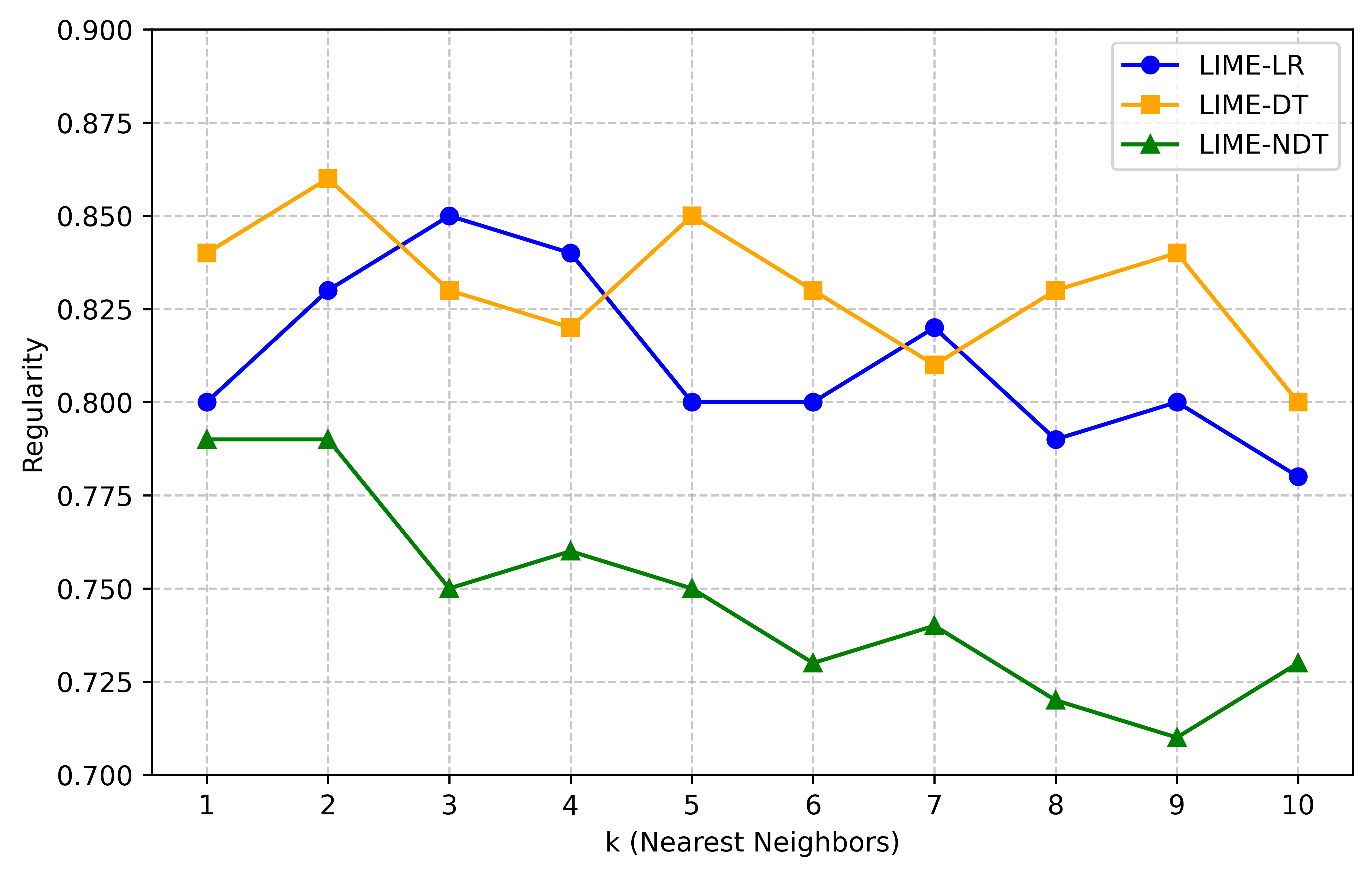}
			\caption{California Housing}
			\label{subfig:Regularity-calif-housing}
		\end{subfigure}%
		\hfill
		\begin{subfigure}{\subfigsize\textwidth}
			\centering
      \includegraphics[width=\textwidth]{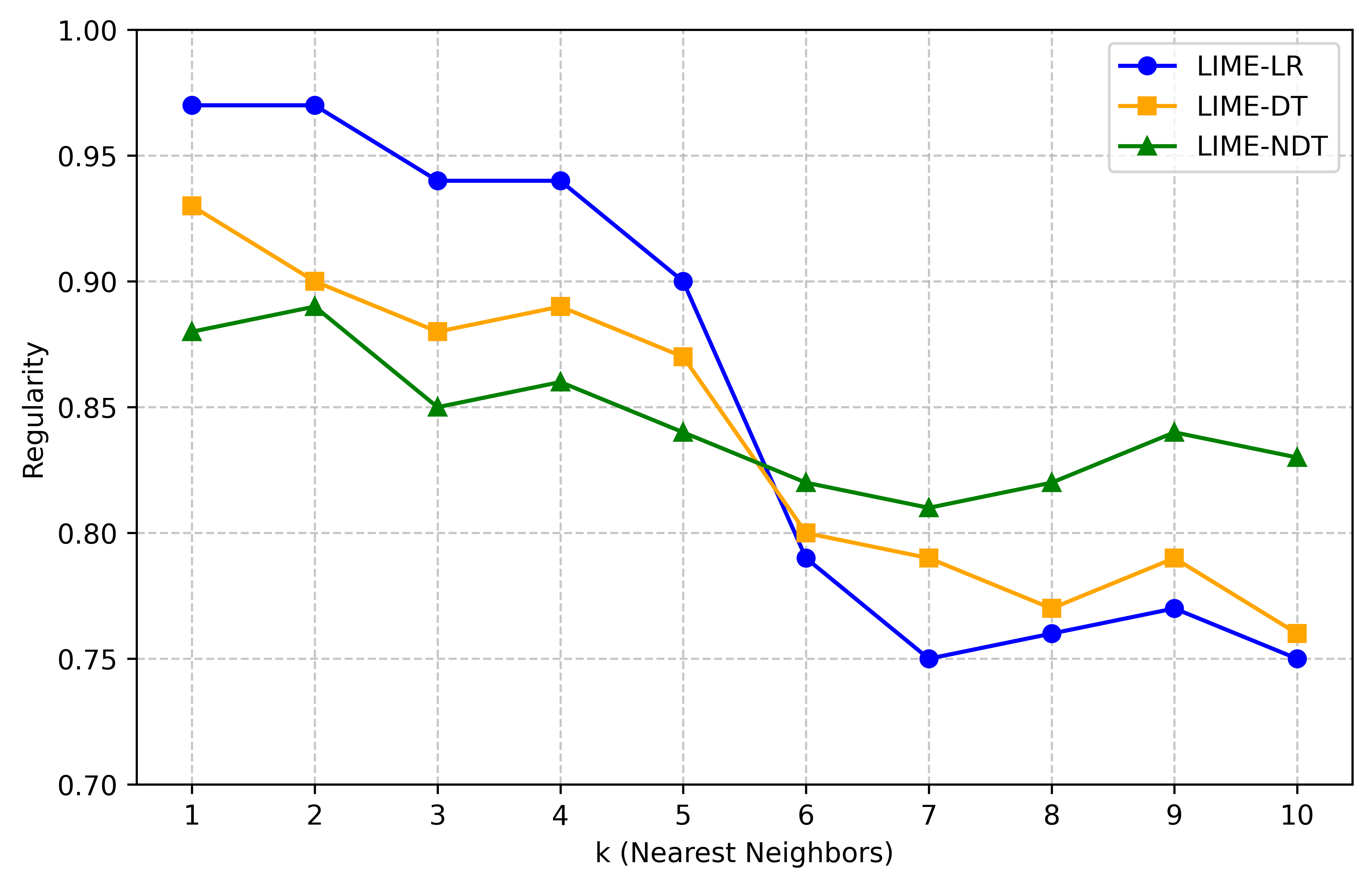}
			\caption{Ames Housing}
			\label{subfig:Regularity-ames-housing}
		\end{subfigure}%
    \caption{Regularity of LIME surrogates (\LIME{LR}, \LIME{DT}, and \NDTLIME) across increasing $k$-NNs on benchmark datasets.}
    \label{fig:regularity_all_datasets}
\end{figure*}

\LIME{LR} exhibits the slowest decay in regularity as $k$ increases. This behavior is a direct consequence of the global linear nature of the surrogate model, which produces explanation vectors that are nearly invariant across the input space. While this results in relatively high regularity values even for large k, it comes at the cost of poor local fidelity, as shown in \Sec{Measuring_Fidelity}. Consequently, the apparent robustness of \LIME{LR} at larger neighborhood scales reflects model rigidity rather than faithful local approximation of the black-box model.

\LIME{DT} shows a sharper decline in regularity as $k$ increases. The hard, axis-aligned splits induced by greedy decision tree training lead to abrupt changes in explanation vectors when instances cross split thresholds. As the neighborhood expands, neighboring instances are increasingly likely to fall into different leaf regions, resulting in sudden drops in cosine similarity between explanations. This behavior highlights the sensitivity of classical decision trees to neighborhood definitions and their limited ability to produce smooth explanation transitions.

In contrast, \NDTLIME demonstrates a more balanced and desirable behavior. For small values of $k$, \NDTLIME achieves high regularity, indicating strong local coherence between explanations of nearby instances. As $k$ increases, regularity decreases gradually rather than abruptly, reflecting the surrogate’s ability to adapt smoothly to evolving decision boundaries. This behavior is a direct consequence of the soft, differentiable splits of NDTs, which allow explanations to transition continuously across regions of the input space without introducing sharp discontinuities.

The rate at which regularity decreases with increasing $k$ varies across datasets and is influenced by intrinsic data characteristics such as dimensionality and class complexity. High-dimensional datasets such as Digits and Covtype exhibit faster regularity decay for all methods, reflecting the fragmented nature of their input spaces. In contrast, lower-dimensional datasets such as Iris and Wine preserve higher regularity over a wider range of $k$, indicating smoother underlying decision boundaries.

Overall, this analysis confirms that \NDTLIME produces explanations that are locally smooth yet globally adaptive. Unlike linear surrogates, which remain artificially consistent across large neighborhoods, and traditional decision trees, which suffer from abrupt explanation shifts, Neural Decision Trees strike a principled balance between continuity and expressiveness. This behavior aligns closely with the fundamental objective of local interpretability: providing explanations that are stable where they should be, and flexible where they must be.

\section{Description and implementation details for the main LIME competitors}\label{app:implementation-competitors}

\inlinetitle{GLIME}{}~%
 is a generalized framework that extends LIME to address its core limitations of instability and low local fidelity. These issues stem from LIME's sampling process, which is often non-local and biased, and from the dominance of the regularization term when sample weights are small. GLIME reformulates the explanation problem by integrating the weighting kernel directly into the sampling distribution.

\inlinetitle{DLIME}{}~%
is implemented following the original formulation, which does not rely on perturbation-based sampling. Instead of generating synthetic instances, DLIME operates directly on the original training data. An agglomerative hierarchical clustering is first applied to partition the dataset into locally homogeneous regions based on feature similarity. For a given instance to be explained, the corresponding cluster is identified using a nearest-neighbor assignment strategy. A local linear surrogate model is then trained exclusively on the datapoints belonging to this cluster to approximate the behavior of the black-box model in the vicinity of the instance. By defining locality through data-driven clustering rather than random perturbations, DLIME produces deterministic explanations and improves stability compared to standard LIME.

\inlinetitle{BayLIME}{}~%
is implemented by integrating Bayesian principles into the standard LIME workflow. It uses the same perturbation and weighting scheme as LR-LIME. The key difference lies in the surrogate model: instead of a standard linear regression, BayLIME employs a Bayesian Linear Regression model (BayesianRidge from scikit-learn). This model introduces a probabilistic prior over the regression coefficients, which helps regularize the model and makes it more robust to noise in the local data sample. The explanation is derived from the posterior mean of the surrogate's coefficients, representing the most probable feature importance values. We used the default non-informative, zero-centered prior for our experiments to ensure a fair and general comparison.

\inlinetitle{OptiLIME}{}~%
is implemented as a framework that optimizes the LIME hyperparameters, particularly the kernel width, to manage the trade-off between fidelity (adherence) and stability. Its implementation involves wrapping the standard LIME process within an optimization loop, typically using Bayesian optimization. For each instance, this optimizer systematically searches for the kernel width that yields the best possible explanation according to a combined objective function that rewards both high fidelity and high stability. This makes the explanation process more computationally intensive but tailored to each specific datapoint.

\inlinetitle{GMM-LIME}{}~%
modifies the local neighborhood definition. It first fits a Gaussian Mixture Model to the training data to identify underlying clusters. When explaining an instance, it uses the GMM to define a probabilistic local neighborhood based on the cluster the instance belongs to. Perturbed samples are then generated according to the learned Gaussian distribution of that specific cluster. This creates a more data-aware neighborhood than standard isotropic perturbations, aiming for improved local approximation and stability by aligning the sampling with the natural structure of the data.

\inlinetitle{sLIME}{}~%
brings as key innovation is an adaptive perturbation strategy. Its implementation requires modifying the initial data generation step. Instead of a fixed perturbation scale, sLIME introduces an algorithm to automatically determine the optimal neighborhood size for each instance. It iteratively adjusts the perturbation kernel width to find a balance where the local linear model is both stable and a good fit. The rest of the pipeline (weighting and training a linear surrogate) remains consistent with standard LIME.

\section{Details of the used datasets}\label{app:dataset-details}
This appendix provides a comprehensive description of the benchmark tabular datasets used to evaluate the proposed \NDTLIME framework. These datasets were selected to represent a diverse range of classification and regression problems, with varying feature dimensions, instance counts, and levels of non-linearity. All datasets were obtained from well-established machine learning repositories such as the UCI Repository and scikit-learn. A summary of these datasets, including their task type, size, and feature composition, is provided in \Tab{tab:dataset_details}. 

\begin{table*}[t]
\centering
\centerline{
\begin{tabular}{ll cc l}
\toprule
\textbf{Dataset} & \textbf{Task} & \textbf{\#\,Instances} & \textbf{\#\,Variables} & \textbf{Target variable}\\
\midrule
{Breast Cancer Wisconsin (Diagnostic)} %
& Classification & 569 & 30 & Diagnosis (malignant\,/\,benign) \\
{Iris} %
& Classification & 150 & 4 & Species of Iris flower ($3$ classes)\\
{Digits} %
& Classification & 1797 & 64 & Handwritten digit (0-9) \\
{Covertype (forest cover type) %
} 
& Classification & 581012 & 54 & Forest cover type (7 classes) \\
{Wine} %
& Classification & 178 & 13 & Cultivar (3 classes) \\
\midrule
{California Housing} %
& Regression & 20640 & 8 & Median house value \\
{Diabetes} %
& Regression & 442 & 10 & Disease progression score \\
{Ames Housing}%
& Regression 
& 2930 & 80 & Sale Price \\
\bottomrule
\end{tabular}
}
\caption{Benchmark tabular datasets used in the experiments.}
\label{tab:dataset_details} 
\end{table*}

\inlinetitle{Datasets used in other research papers}{.} 
We present different datasets used in works proposing other LIME variants in the related literature. This overview, summarized in \Tab{tab:dataset-list-literature}, contextualizes the selection of datasets in our study and highlights common benchmarks used to evaluate local explanation methods.

Several variants, including GLIME, MPS-LIME, and KL-LIME, were primarily evaluated on non-tabular data, such as images or text, and therefore did not use the tabular datasets common in this paper. For tabular data, DLIME experiments were conducted on the Breast Cancer Wisconsin, Indian Liver Patient, and Hepatitis datasets, focusing on medical diagnosis scenarios. The MeLIME variant used the classic Iris dataset to demonstrate its approach, while SLIME was validated on the Breast Cancer Wisconsin, MARS test function, and Sepsis prediction datasets. The iLIME variant was tested on a broader range of classification tasks, including the Iris, Wine, Breast Cancer, Cervical Cancer, Diabetes, Hypertension, and Mortality datasets. Similarly, BayLIME used the Boston House-Price and Breast Cancer Wisconsin datasets to showcase its Bayesian framework. Other methods, like OptiLIME, used synthetic toy data and the NHANES I dataset, and GMM-LIME evaluations were performed on the Human Gait Database (HugaDB) and GaitRec for interpreting sensor-based models.

This compilation shows a diverse but somewhat fragmented use of datasets across different LIME variants, underscoring the need for standardized, comprehensive benchmarks like those employed in our work to enable fair and robust comparisons.

\begin{table}\small
\begin{tabular}{ll}
\toprule
LIME variant & Datasets used \\
\midrule
GLIME & No tabular data \\
DLIME & Breast Cancer Wisconsin, Indian Liver Patient,\\ 
& Hepatitis\\
MPS-LIME & No tabular data\\
KL-LIME & No tabular data\\
MeLIME & Iris\\
sLIME & Breast Cancer Wisconsin, MARS test function, \\ 
& Sepsis prediction\\
iLIME & Iris, Wine, Breast Cancer, Cervical Cancer, \\
& Diabetes, Hypertension, Mortality\\
BayLIME & Boston House-Price, Breast Cancer Wisconsin\\
OptiLIME & Toy data, NHANES I\\
GMM-LIME & Human Gait Database (HugaDB), GaitRec\\
\bottomrule
\end{tabular}%
\caption{A list of the datasets used in the LIME-related literature.}\label{tab:dataset-list-literature}
\end{table}

\begin{table*}[t]
 \small
 \centering
\centerline{
 \resizebox{1.4\textwidth}{!}{
 \begin{tabular}{lcccccccc}
 \toprule
 \textbf{LIME Variant}   & \textbf{Breast Cancer}      & \textbf{Iris}         & \textbf{Wine}         & \textbf{Digits}       & \textbf{Covtype}      & \textbf{CA Housing} & \textbf{Diabetes}      & \textbf{Ames Housing}   \\
 \midrule
  GLIME      & \textbf{0.825 $\pm$ 0.012} & 0.640 $\pm$ 0.021 & 0.420 $\pm$ 0.015 & \textit{0.510 $\pm$ 0.011} & \textit{0.590 $\pm$ 0.025} & 0.465 $\pm$ 0.031 & \textit{0.907 $\pm$ 0.028} & \textbf{0.913 $\pm$ 0.022} \\
  DLIME      & 0.435 $\pm$ 0.009 & 0.563 $\pm$ 0.025 & 0.395 $\pm$ 0.018 & 0.285 $\pm$ 0.013 & 0.410 $\pm$ 0.028 & 0.260 $\pm$ 0.035 & 0.760 $\pm$ 0.031 & 0.751 $\pm$ 0.025 \\
  MPS-LIME   & 0.612 $\pm$ 0.015 & 0.600 $\pm$ 0.028 & 0.365 $\pm$ 0.021 & 0.260 $\pm$ 0.016 & 0.385 $\pm$ 0.031 & 0.342 $\pm$ 0.040 & 0.840 $\pm$ 0.035 & 0.830 $\pm$ 0.029 \\
  KL-LIME    & 0.514 $\pm$ 0.016 & 0.585 $\pm$ 0.031 & 0.350 $\pm$ 0.023 & 0.245 $\pm$ 0.018 & 0.370 $\pm$ 0.033 & 0.321 $\pm$ 0.045 & 0.798 $\pm$ 0.038 & \textit{0.871 $\pm$ 0.032} \\
  MeLIME    & 0.547 $\pm$ 0.018 & 0.630 $\pm$ 0.033 & 0.435 $\pm$ 0.025 & 0.514 $\pm$ 0.020 & 0.355 $\pm$ 0.035 & 0.415 $\pm$ 0.048 & 0.812 $\pm$ 0.040 & 0.808 $\pm$ 0.035 \\
  sLIME      & 0.718 $\pm$ 0.005 & 0.610 $\pm$ 0.026 & 0.405 $\pm$ 0.017 & 0.457 $\pm$ 0.012 & 0.468 $\pm$ 0.027 & \textit{0.502 $\pm$ 0.033} & 0.853 $\pm$ 0.026 & 0.834 $\pm$ 0.023 \\
  iLIME      & 0.688 $\pm$ 0.014 & \textit{0.655 $\pm$ 0.019} & \textit{0.440 $\pm$ 0.013} & 0.325 $\pm$ 0.010 & 0.460 $\pm$ 0.023 & 0.369 $\pm$ 0.028 & 0.882 $\pm$ 0.024 & 0.852 $\pm$ 0.020 \\
  BayLIME    & 0.739 $\pm$ 0.022 & 0.624 $\pm$ 0.030 & 0.418 $\pm$ 0.024 & 0.476 $\pm$ 0.019 & 0.415 $\pm$ 0.034 & 0.397 $\pm$ 0.043 & 0.894 $\pm$ 0.037 & 0.845 $\pm$ 0.030 \\
  OptiLIME  & 0.559 $\pm$ 0.013 & 0.570 $\pm$ 0.022 & 0.415 $\pm$ 0.016 & 0.305 $\pm$ 0.011 & 0.435 $\pm$ 0.026 & 0.298 $\pm$ 0.030 & 0.817 $\pm$ 0.025 & 0.720 $\pm$ 0.021 \\
  GMM-LIME   & 0.621 $\pm$ 0.025 & 0.545 $\pm$ 0.038 & 0.320 $\pm$ 0.028 & 0.215 $\pm$ 0.022 & 0.335 $\pm$ 0.038 & 0.295 $\pm$ 0.051 & 0.775 $\pm$ 0.043 & 0.798 $\pm$ 0.038 \\
	\textbf{NDT (ours)} & \textit{0.785 $\pm$ 0.031} & \textbf{0.860 $\pm$ 0.021} & \textbf{0.518 $\pm$ 0.131} & \textbf{0.577 $\pm$ 0.106} & \textbf{0.632 $\pm$ 0.067} & \textbf{0.960 $\pm$ 0.014} & \textbf{0.920 $\pm$ 0.035} & 0.713
    $\pm$ 0.054\\
 \bottomrule
 \end{tabular}
 }}
 \caption{Fidelity of different LIME variants on benchmark datasets.}
\label{tab:fidelity_lime_variants}
\end{table*}

\begin{table*}[t]
\small
\centering
\centerline{
\resizebox{1.4\textwidth}{!}{
\begin{tabular}{lcccccccc}
\toprule
\textbf{LIME Variant}   & \textbf{Breast Cancer}      & \textbf{Iris}         & \textbf{Wine}         & \textbf{Digits}       & \textbf{Covtype}      & \textbf{CA Housing} & \textbf{Diabetes}      & \textbf{Ames Housing}   \\
\midrule
GLIME          & \textit{0.996 $\pm$ 0.003}  & 0.987 $\pm$ 0.011  & 0.997 $\pm$ 0.001 & \textit{0.990 $\pm$ 0.004}  & 0.991 $\pm$ 0.010  & \textit{0.999 $\pm$ 0.000}   & \textit{0.999 $\pm$ 0.003}  & 0.994 $\pm$ 0.007   \\
DLIME          & \textbf{1.000 $\pm$ 0.000}  & \textbf{1.000 $\pm$ 0.000} & \textbf{1.000 $\pm$ 0.000} & \textbf{1.000 $\pm$ 0.000} & \textbf{1.000 $\pm$ 0.000} & \textbf{1.000 $\pm$ 0.000}   & \textbf{1.000 $\pm$ 0.000}  & \textbf{1.000 $\pm$ 0.000}  \\
MPS-LIME       & 0.956 $\pm$ 0.005  & 0.963 $\pm$ 0.008  & 0.974 $\pm$ 0.003  & 0.908 $\pm$ 0.018  & 0.954 $\pm$ 0.024  & 0.973 $\pm$ 0.009    & 0.972 $\pm$ 0.004  & 0.965 $\pm$ 0.010   \\
KL-LIME        & 0.960 $\pm$ 0.007  & 0.954 $\pm$ 0.013  & 0.965 $\pm$ 0.010  & 0.938 $\pm$ 0.024  & 0.949 $\pm$ 0.018  & 0.960 $\pm$ 0.007    & 0.971 $\pm$ 0.002  & 0.954 $\pm$ 0.014   \\
MeLIME         & 0.949 $\pm$ 0.010  & \textit{0.995 $\pm$ 0.004}  & 0.934 $\pm$ 0.012  & 0.973 $\pm$ 0.012  & 0.982 $\pm$ 0.005  & 0.963 $\pm$ 0.003    & \textit{0.999 $\pm$ 0.000}  & 0.995 $\pm$ 0.004   \\
sLIME          & 0.995 $\pm$ 0.005  & 0.986 $\pm$ 0.003 & \textit{0.999 $\pm$ 0.000} & 0.985 $\pm$ 0.005 & \textit{0.999 $\pm$ 0.002} & \textit{0.999 $\pm$ 0.001}   & \textit{0.999 $\pm$ 0.001}  & \textit{0.998 $\pm$ 0.002}  \\
iLIME          & 0.975 $\pm$ 0.004  & 0.974 $\pm$ 0.010  & 0.984 $\pm$ 0.005 & 0.953 $\pm$ 0.022  & 0.972 $\pm$ 0.015  & 0.976 $\pm$ 0.005   & 0.978 $\pm$ 0.005  & 0.973 $\pm$ 0.011   \\
BayLIME        & 0.940 $\pm$ 0.015  & 0.935 $\pm$ 0.043  & 0.948 $\pm$ 0.027  & 0.910 $\pm$ 0.055  & 0.926 $\pm$ 0.025  & 0.944 $\pm$ 0.013    & 0.951 $\pm$ 0.009  & 0.939 $\pm$ 0.036   \\
OptiLIME      & 0.980 $\pm$ 0.003  & 0.974 $\pm$ 0.017  & 0.982 $\pm$ 0.005 & 0.968 $\pm$ 0.017  & 0.975 $\pm$ 0.015  & 0.987 $\pm$ 0.002   & 0.984 $\pm$ 0.005  & 0.993 $\pm$ 0.002   \\
GMM-LIME       & 0.930 $\pm$ 0.020  & 0.926 $\pm$ 0.006  & 0.936 $\pm$ 0.024  & 0.908 $\pm$ 0.030  & 0.917 $\pm$ 0.021  & 0.934 $\pm$ 0.010    & 0.947 $\pm$ 0.008  & 0.925 $\pm$ 0.036   \\
	\textbf{NDT (ours)} & 0.991 $\pm$ 0.004 & 0.943 $\pm$ 0.010 & 0.998 $\pm$ 0.002 & 0.816 $\pm$ 0.023 & 0.931 $\pm$ 0.007 & 0.973 $\pm$ 0.001 & 0.998 $\pm$ 0.002 & 0.990 $\pm$ 0.013\\
\bottomrule
\end{tabular}
}}
\caption{Stability of different LIME variants on benchmark datasets.}
\label{tab:stability_lime_variants}
\end{table*}

\begin{table*}[t]
\small
\centering
\centerline{
\resizebox{1.4\textwidth}{!}{
\begin{tabular}{lcccccccc}
\toprule
\textbf{LIME Variant}   & \textbf{Breast Cancer}      & \textbf{Iris}         & \textbf{Wine}         & \textbf{Digits}       & \textbf{Covtype}      & \textbf{CA Housing} & \textbf{Diabetes}      & \textbf{Ames Housing}   \\
\midrule
GLIME          & 0.905 $\pm$ 0.014  & 0.812 $\pm$ 0.021 & 0.842 $\pm$ 0.018 & 0.597 $\pm$ 0.027  & \textit{0.894 $\pm$ 0.019}  & \textbf{0.869 $\pm$ 0.016}   & \textit{0.952 $\pm$ 0.012}  & \textbf{0.962 $\pm$ 0.015}  \\
DLIME          & 0.718 $\pm$ 0.012  & 0.725 $\pm$ 0.019 & 0.675 $\pm$ 0.017 & 0.508 $\pm$ 0.025  & 0.806 $\pm$ 0.017  & 0.756 $\pm$ 0.015   & 0.844 $\pm$ 0.011  & 0.814 $\pm$ 0.014  \\
MPS-LIME       & 0.782 $\pm$ 0.018  & 0.761 $\pm$ 0.026 & 0.764 $\pm$ 0.022 & 0.622 $\pm$ 0.031  & 0.811 $\pm$ 0.024  & 0.812 $\pm$ 0.021   & 0.905 $\pm$ 0.017  & 0.853 $\pm$ 0.020  \\
KL-LIME        & 0.803 $\pm$ 0.020  & 0.722 $\pm$ 0.028 & 0.739 $\pm$ 0.025 & 0.623 $\pm$ 0.034  & 0.816 $\pm$ 0.026  & 0.755 $\pm$ 0.023   & 0.828 $\pm$ 0.019  & 0.818 $\pm$ 0.022  \\
MeLIME         & 0.852 $\pm$ 0.021  & 0.748 $\pm$ 0.030 & 0.802 $\pm$ 0.027 & 0.610 $\pm$ 0.036  & \textbf{0.901 $\pm$ 0.028}  & 0.781 $\pm$ 0.025   & 0.874 $\pm$ 0.020  & 0.844 $\pm$ 0.024  \\
sLIME          & 0.888 $\pm$ 0.010  & 0.746 $\pm$ 0.016 & \textit{0.892 $\pm$ 0.013} & \textbf{0.731 $\pm$ 0.021}  & 0.881 $\pm$ 0.015  & 0.819 $\pm$ 0.013   & 0.958 $\pm$ 0.008  & 0.826 $\pm$ 0.011  \\
iLIME          & \textit{0.912 $\pm$ 0.013}  & 0.831 $\pm$ 0.018 & 0.879 $\pm$ 0.015 & 0.714 $\pm$ 0.023  & 0.889 $\pm$ 0.017  & \textit{0.858 $\pm$ 0.014}   & 0.946 $\pm$ 0.010  & \textit{0.918 $\pm$ 0.013}  \\
BayLIME        & 0.843 $\pm$ 0.024  & 0.739 $\pm$ 0.034 & 0.795 $\pm$ 0.029 & 0.602 $\pm$ 0.041  & 0.814 $\pm$ 0.031  & 0.773 $\pm$ 0.027   & 0.868 $\pm$ 0.022  & 0.839 $\pm$ 0.026  \\
OptiLIME      & 0.911 $\pm$ 0.011  & \textit{0.838 $\pm$ 0.019} & 0.886 $\pm$ 0.014 & \textit{0.724 $\pm$ 0.022}  & 0.866 $\pm$ 0.016  & 0.864 $\pm$ 0.013   & 0.912 $\pm$ 0.009  & 0.902 $\pm$ 0.012  \\
GMM-LIME       & 0.817 $\pm$ 0.027  & 0.721 $\pm$ 0.039 & 0.781 $\pm$ 0.033 & 0.538 $\pm$ 0.044  & 0.772 $\pm$ 0.035  & 0.762 $\pm$ 0.030   & 0.857 $\pm$ 0.025  & 0.826 $\pm$ 0.029  \\
	\textbf{NDT (ours)} & \textbf{0.915 $\pm$ 0.024} & \textbf{0.820 $\pm$ 0.039} & \textbf{0.910 $\pm$ 0.023} & 0.563 $\pm$ 0.034 & 0.849 $\pm$ 0.072 & 0.794 $\pm$ 0.033 & \textbf{0.978 $\pm$ 0.017} & 0.893 $\pm$ 0.027\\
\bottomrule
\end{tabular}
}}
\caption{Regularity of different LIME variants on benchmark datasets.}
\label{tab:regularity_lime_variants}
\end{table*}

\section{Smoothness and local Taylor expansion of Neural Decision Trees}\label{app:continuous-NDT}

In this appendix, we formally establish the smoothness properties of Neural Decision Trees (NDTs) and discuss their implications for local interpretability. Our goal is to demonstrate that NDTs belong to the class of $C^\infty(\mathbb{R}^d)$ functions, which allows for the use of Taylor expansions to characterize their local behavior. This smoothness is a critical advantage over traditional decision trees, as it enables continuous and differentiable explanations that better capture local decision boundaries. To achieve this, we provide proofs of the smoothness of NDTs and derive their local Taylor expansion, thereby highlighting how this property contributes to the interpretability and stability of the explanations generated by NDT-LIME.

\subsection{Neural Decision Trees are $C^\infty$}
\label{NDT_C_infinite}

\inlinetitle{Definition}{.}~%
A Neural Decision Tree (NDT) defines a function
$g : \mathbb{R}^d \to \mathbb{R}$ of the form:
\begin{equation}
g(x) = \sum_{k=1}^K  a_k  \prod_{m \in \mathcal P_k} \sigma_\gamma(x_{j_m} - \alpha_m),
\label{eq:ndt_def}
\end{equation}
where the sum goes over each $k=1,...,K$ root-to-leaf path, $\mathcal P_k$ is the set of splits along that path, $a_k \in \mathbb{R}$ are leaf values, and $\sigma_\gamma$ is a smooth gating function (\eg $\sigma_\gamma(u)=\tanh(\gamma u)$).
\begin{lemma}
The function $\sigma_\gamma(u)=\tanh(\gamma u)$ belongs to $C^\infty(\mathbb{R})$.
\end{lemma}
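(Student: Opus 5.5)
The plan is to write $\tanh$ as a composition and quotient of functions already known to be smooth, and then add the inner linear map $u\mapsto \gamma u$. Concretely, I will use the identity
\[
\tanh(v) = \frac{e^{v}-e^{-v}}{e^{v}+e^{-v}} = 1 - \frac{2}{e^{2v}+1}, \qquad v\in\mathbb{R}.
\]
The exponential $v\mapsto e^{v}$ is $C^\infty(\mathbb{R})$, since it equals its own derivative and hence every derivative exists and is continuous. Consequently the numerator and denominator are $C^\infty$ as sums and compositions of smooth maps, using that $v\mapsto -v$ and $v\mapsto 2v$ are linear.

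The only point that needs care, and the step I regard as the main (if mild) obstacle, is the division. The quotient rule preserves $C^\infty$ only where the denominator does not vanish. Here $e^{v}+e^{-v} = 2\cosh v \ge 2 > 0$, and equivalently $e^{2v}+1>1$, for every real $v$. So $v\mapsto 1/(e^{2v}+1)$ is the composition of the smooth function $t\mapsto 1/t$ on $(0,\infty)$ with a smooth map into $(1,\infty)$. It is therefore $C^\infty(\mathbb{R})$, and so is $\tanh$.

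To make the smoothness argument self-contained rather than relying on closure properties, I would also note the differential identity $\tanh' = 1-\tanh^2$. By induction on $n$, this shows that $\tanh^{(n)} = P_n(\tanh)$ for a polynomial $P_n$ defined by $P_0(t)=t$ and $P_{n+1}(t)=P_n'(t)(1-t^2)$. Each derivative is then a polynomial in a continuous function, hence exists and is continuous.

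Finally, $\sigma_\gamma(u)=\tanh(\gamma u)$ is the composition of $\tanh$ with the linear, hence $C^\infty$, map $u\mapsto\gamma u$. By the chain rule, $\sigma_\gamma^{(n)}(u)=\gamma^{n}\tanh^{(n)}(\gamma u)$, which is continuous for all $n$. This gives $\sigma_\gamma\in C^\infty(\mathbb{R})$ for every $\gamma>0$. In fact the argument shows it for every real $\gamma$, and it prepares the way for concluding that the finite sums of products in \Eq{eq:ndt_def} are $C^\infty(\mathbb{R}^d)$.
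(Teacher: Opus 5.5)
Your proof is correct and has the same structure as the paper's: first show that $\tanh$ is smooth, then compose with the linear map $u\mapsto\gamma u$. The paper simply cites the real-analyticity of $\tanh$ for the first step, whereas you verify smoothness directly, via the exponential quotient with a nonvanishing denominator and the recursion $\tanh^{(n)}=P_n(\tanh)$, which is a more self-contained justification of the same fact.
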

\begin{proof}
The hyperbolic tangent function is analytic on $\mathbb{R}$, and composition with an affine map preserves smoothness.
\end{proof}
\begin{lemma}
Finite sums and finite products of $C^\infty$ functions are $C^\infty$.
\end{lemma}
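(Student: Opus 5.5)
The plan is to reduce the statement to the case of two functions and then use induction for finitely many. Fix an open set $U\subseteq\mathbb{R}^d$ (here $U=\mathbb{R}^d$). Let $f,g\in C^\infty(U)$. I will show by induction on $n$ that $f+g$ and $fg$ belong to $C^n(U)$ for every $n\ge 0$.

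\emph{Base case and sums.} For $n=0$, sums and products of continuous functions are continuous. For the sum, linearity of differentiation gives $\partial^\beta(f+g)=\partial^\beta f+\partial^\beta g$ for every multi-index $\beta$. Each term on the right is continuous, so $f+g\in C^\infty(U)$.

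\emph{Products.} I will prove by induction on $|\beta|$ that $\partial^\beta(fg)$ exists and is given by the multivariate Leibniz rule
\[
\partial^\beta(fg)=\sum_{\alpha\le\beta}\binom{\beta}{\alpha}\,\partial^\alpha f\,\partial^{\beta-\alpha} g .
\]
The inductive step applies one more partial derivative $\partial_i$ to each term $\partial^\alpha f\,\partial^{\beta-\alpha}g$. That term is a product of two $C^1$ functions, so the one-variable product rule applies. Collecting terms uses Pascal's identity on the multinomial coefficients. Every summand on the right is a product of continuous functions, so $\partial^\beta(fg)$ is continuous for all $\beta$, and hence $fg\in C^\infty(U)$.

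\emph{Finite families.} For $h_1,\dots,h_m\in C^\infty$, write $\sum_{i\le m}h_i=\big(\sum_{i<m}h_i\big)+h_m$ and $\prod_{i\le m}h_i=\big(\prod_{i<m}h_i\big)\,h_m$. The two-function case then gives the result by induction on $m$. The empty sum and empty product are the constants $0$ and $1$, which are trivially $C^\infty$.

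\emph{Main obstacle.} The only step needing care is the bookkeeping in the Leibniz induction. One must check that the derivative of each term exists before collecting coefficients. This holds because every factor is at least $C^1$, which is guaranteed since $f,g\in C^\infty$.

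\emph{Application to \Eq{eq:ndt_def}.} Combined with the preceding lemma, each factor $x\mapsto\sigma_\gamma(x_{j_m}-\alpha_m)$ is $C^\infty$ on $\mathbb{R}^d$, being the composition of $\tanh(\gamma\,\cdot)$ with an affine map. Scaling by the constants $a_k$ preserves smoothness, so $g$ in \Eq{eq:ndt_def} is $C^\infty(\mathbb{R}^d)$. In particular $g\in C^1(\mathbb{R}^d)$, which is the hypothesis used by the continuity proposition.
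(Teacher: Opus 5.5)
Your proof is correct and takes the same route as the paper, which simply says the result "follows directly from standard differentiation rules"; you spell those rules out (linearity of $\partial^\beta$, the Leibniz formula by induction on $|\beta|$, and induction on the number of summands or factors). The multi-index notation $\partial^\alpha f$ quietly relies on mixed partials commuting (Schwarz's theorem), which holds for $C^\infty$ functions, so it causes no gap.
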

\begin{proof}
This follows directly from standard differentiation rules.
\end{proof}
\begin{proposition}
The function $g$ defined in~\Eq{eq:ndt_def} belongs to $C^\infty(\mathbb{R}^d)$.
\end{proposition}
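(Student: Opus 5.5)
The plan is to build $g$ from elementary smooth pieces and invoke the two lemmas just stated. Fix a path index $k$ and a split $m \in \mathcal P_k$, and consider the map $x \mapsto \sigma_\gamma(x_{j_m} - \alpha_m)$ on $\mathbb{R}^d$. It factors as $\sigma_\gamma \circ \ell_m$, where $\ell_m(x) = x_{j_m} - \alpha_m = e_{j_m}^\top x - \alpha_m$ is an affine map $\mathbb{R}^d \to \mathbb{R}$. Every affine map is $C^\infty$: its first partial derivatives are constants and all higher derivatives vanish. By the first lemma, $\sigma_\gamma \in C^\infty(\mathbb{R})$. The chain rule, applied inductively, then shows that a composition of $C^\infty$ maps is $C^\infty$. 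Hence each gate $x \mapsto \sigma_\gamma(x_{j_m}-\alpha_m)$ belongs to $C^\infty(\mathbb{R}^d)$.

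Explicitly, all partial derivatives of the gate vanish except those in the variable $x_{j_m}$. For $r \ge 1$ we have $\partial^r_{x_{j_m}} \sigma_\gamma(x_{j_m}-\alpha_m) = \gamma^r \tanh^{(r)}(\gamma(x_{j_m}-\alpha_m))$, which is continuous. This gives a direct check that avoids invoking the multivariate chain rule in full generality.

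Next, for each $k$ the path product $\prod_{m\in\mathcal P_k}\sigma_\gamma(x_{j_m}-\alpha_m)$ is a finite product, since $|\mathcal P_k|$ is at most the depth of the tree. By the second lemma it is $C^\infty$. The Leibniz formula makes this explicit: every mixed partial of order $r$ is a finite sum of products of partials of the factors, and each such term is continuous. Multiplying by the constant $a_k$ preserves smoothness. Finally, $g$ is a finite sum over the $K$ leaves of these functions, so another application of the second lemma gives $g \in C^\infty(\mathbb{R}^d)$.

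I do not expect any real obstacle. The one point that needs care is that the lemmas are stated for functions on $\mathbb{R}$, while $g$ lives on $\mathbb{R}^d$, so the composition step with the affine coordinate maps must be stated explicitly. I would handle this with the explicit derivative formula for each gate given above. The same argument applies verbatim to the full three-layer NDT of \Sec{sec:NDTs}, where the second layer also applies $\tanh$ to an affine combination of first-layer outputs; that case only adds one more composition of a $C^\infty$ map with an affine one. In particular $g \in C^1(\mathbb{R}^d)$, which is what the continuity-of-explanations proposition requires.
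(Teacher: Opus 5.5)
Your proposal is correct and follows the same route as the paper. Each gate is $C^\infty$ as $\tanh$ composed with an affine map, each path term is a finite product of gates, and $g$ is a finite sum of path terms; your explicit handling of the coordinate map $x \mapsto x_{j_m}-\alpha_m$ on $\mathbb{R}^d$ and your extension to the three-layer network of \Sec{sec:NDTs} are sound details that the paper's short proof leaves implicit.
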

\begin{proof}
Each gating function $\sigma_\gamma(x_{j_m}-\alpha_m)$ is $C^\infty$ by the previous lemma.  
Each path contribution is a finite product of $C^\infty$ functions and is therefore $C^\infty$.  
Since $g$ is a finite sum of such terms, we conclude that $g \in C^\infty(\mathbb{R}^d)$.
\end{proof}

In particular, NDTs define functions in $C^2(\mathbb{R}^d)$, which is sufficient for second-order Taylor expansions.~\ref{Taylor_approximation}

\subsection{Local Taylor Expansion of an NDT}
\label{Taylor_approximation}
Let $x_0 \in \mathbb{R}^d$ be a point of interest.  
Since $g \in C^2(\mathbb{R}^d)$, Taylor's theorem applies in a neighborhood of $x_0$.

\begin{proposition}[Second-order Taylor expansion]
For any $h \in \mathbb{R}^d$ sufficiently small,
\begin{equation}
g(x_0 + h) 
=
g(x_0)
+
\nabla g(x_0)^\top h
+
\frac{1}{2} h^\top H_g(x_0) h
+
o(\|h\|^2),
\label{eq:taylor_ndt}
\end{equation}
where $\nabla g(x_0)$ denotes the gradient and $H_g(x_0)$ the Hessian matrix of $g$ at $x_0$.
\end{proposition}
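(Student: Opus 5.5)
We derive the expansion from the preceding proposition, which gives $g \in C^\infty(\mathbb{R}^d)$ and in particular $g \in C^2(\mathbb{R}^d)$. The plan is to reduce to one dimension along a line. Fix $h \in \mathbb{R}^d$ and set $\varphi(t) := g(x_0 + t h)$ for $t \in [0,1]$. Since $t \mapsto x_0 + th$ is affine and $g \in C^2$, the chain rule gives $\varphi \in C^2([0,1])$. Its first two derivatives are
\[
\varphi'(t) = \nabla g(x_0+th)^\top h, \qquad \varphi''(t) = h^\top H_g(x_0+th)\, h .
\]

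Next apply the one-dimensional Taylor theorem with Lagrange remainder to $\varphi$ on $[0,1]$. It yields some $\theta \in (0,1)$ with $\varphi(1) = \varphi(0) + \varphi'(0) + \tfrac12 \varphi''(\theta)$. In terms of $g$, this reads
\[
g(x_0+h) = g(x_0) + \nabla g(x_0)^\top h + \tfrac12 h^\top H_g(x_0)h + R(h),
\]
where
\[
R(h) := \tfrac12 h^\top \bigl(H_g(x_0+\theta h) - H_g(x_0)\bigr) h .
\]
Note that $\theta$ may depend on $h$.

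The only step needing care is showing $R(h) = o(\|h\|^2)$. By Cauchy--Schwarz and the operator norm, $|R(h)| \le \tfrac12 \|h\|^2 \, \|H_g(x_0+\theta h) - H_g(x_0)\|$. The entries of $H_g$ are the second partials $\partial_i\partial_j g$, and these are continuous because $g \in C^2$. So $H_g$ is continuous at $x_0$ in any matrix norm. Since $\|\theta h\| \le \|h\|$, the dependence of $\theta$ on $h$ does no harm: $\|H_g(x_0+\theta h) - H_g(x_0)\| \to 0$ as $h \to 0$. Dividing by $\|h\|^2$ then gives $R(h)/\|h\|^2 \to 0$, which is exactly the $o(\|h\|^2)$ term in \Eq{eq:taylor_ndt}.

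The argument is in fact valid for every $h$, so "sufficiently small" only matters for the asymptotic meaning of the remainder. Optionally, one could add the explicit first partials of \Eq{eq:ndt_def} via the product rule, using $\sigma_\gamma'(u) = \gamma(1-\tanh^2(\gamma u))$. This would link $\nabla g(x_0)$ to the explanation map $E$ of \Eq{eq:explanation_map}, but it is not needed for the proof.
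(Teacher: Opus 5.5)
Your proof is correct and takes essentially the same approach as the paper, which justifies the expansion only by noting that $g \in C^2(\mathbb{R}^d)$ (a consequence of the preceding $C^\infty$ proposition) and then invoking Taylor's theorem. You make that step explicit by proving the Peano-remainder form yourself, via the reduction $\varphi(t)=g(x_0+th)$, the Lagrange remainder, and continuity of $H_g$ at $x_0$; you also correctly note that the dependence of $\theta$ on $h$ causes no problem.
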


The first-order term $\nabla g(x_0)$ provides well-defined and continuous feature attributions, while the second-order term captures local curvature and interactions between features that appear jointly along decision paths. This contrasts with linear surrogates, whose Hessian is identically zero, and with hard decision trees, which do not admit a Taylor expansion due to discontinuities.

\section{Theoretical limits of local fidelity}

\subsection{Local fidelity as a constrained approximation problem}

Let $f : \mathbb{R}^d \to \mathbb{R}$ be a black-box model and let $x_0 \in \mathbb{R}^d$ be a point of interest.
We consider a locality distribution $\pi_{x_0}$ supported on a neighborhood
$U = B(x_0,r)$.

For a given class of surrogate models $\mathcal G$, we define the minimal achievable
local fidelity error as
\begin{equation}
\mathcal E_{\mathcal G}(f;x_0)
=
\inf_{g \in \mathcal{G}}
\| f - g \|_{L^2(\pi_{x_0})}.
\label{eq:fidelity_def}
\end{equation}
This quantity characterizes intrinsic limitations of the surrogate class,
independently of optimization or sampling effects.

\subsection{Linear surrogates (LR-LIME)}
Assume that $f$ is twice continuously differentiable in a neighborhood of $x_0$ and that
\[
H_f(x_0) \neq 0,
\]
where $H_f(x_0)$ denotes the Hessian of $f$ at $x_0$.
Let $\mathcal G_{\mathrm{LR}}$ denote the class of affine functions.

\begin{proposition}[Lower bound for linear surrogates, informal]
For sufficiently small $r$, there exists a constant $c>0$ such that
\begin{equation}
\mathcal E_{\mathcal{G}_{\mathrm{LR}}}(f;x_0)
\;\ge\;
c\, r^2 \, \|H_f(x_0)\|.
\label{eq:lr_lower_bound}
\end{equation}
\end{proposition}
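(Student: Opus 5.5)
The plan is to reduce the statement to a finite-dimensional fact about quadratic forms through a second-order Taylor expansion and a rescaling. First, I will make explicit the assumption that ``informal'' hides: the locality distributions form a \emph{scale family}. Concretely, $\pi_{x_0}$ is the push-forward of a fixed probability measure $\mu$ on the unit ball $B(0,1)$ under $u \mapsto x_0 + r u$, for instance the uniform or a truncated Gaussian law. I will also assume that the support of $\mu$ has nonempty interior. Without such an assumption the constant $c$ could not be uniform in $r$.

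Write $f(x_0+h) = f(x_0) + \nabla f(x_0)^\top h + \tfrac12 h^\top H_f(x_0) h + R(h)$. Since $f\in C^2$ near $x_0$, we have $|R(h)| \le \eta(\|h\|)\,\|h\|^2$ with $\eta(s)\to 0$ as $s\to 0$. The affine part of $f$ lies in $\mathcal G_{\mathrm{LR}}$, and $\mathcal G_{\mathrm{LR}}$ is invariant under adding affine functions. By the triangle inequality in $L^2(\pi_{x_0})$ this gives
\[
\mathcal E_{\mathcal G_{\mathrm{LR}}}(f;x_0) \;\ge\; \inf_{a,b}\bigl\| \tfrac12 h^\top H_f(x_0) h - a - b^\top h \bigr\|_{L^2(\pi_{x_0})} - \|R\|_{L^2(\pi_{x_0})}.
\]
The remainder term is at most $\eta(r)\,r^2$, because $\pi_{x_0}$ is supported on $B(x_0,r)$.

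For the main term, substitute $h = r u$ with $u\sim\mu$ and absorb the factors $r$ and $r^2$ into $b$ and $a$. The infimum then becomes exactly $r^2 D(H_f(x_0))$, where
\[
D(H) := \inf_{a\in\mathbb R,\, b\in\mathbb R^d} \bigl\| \tfrac12 u^\top H u - a - b^\top u\bigr\|_{L^2(\mu)}
\]
is defined on symmetric $d\times d$ matrices. It remains to show $D(H)\ge c_0\|H\|$ for some $c_0>0$ depending only on $\mu$ and $d$. I will argue in three steps.
\begin{itemize}
\item $D$ is the quotient seminorm of the linear map $H\mapsto \tfrac12 u^\top H u$ modulo the finite-dimensional subspace of affine functions in $L^2(\mu)$. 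Hence $D$ is a seminorm; in particular it is positively homogeneous and continuous.
\item $D(H)=0$ forces $H=0$. Because the affine subspace is finite-dimensional and therefore closed, the infimum is attained. So $D(H)=0$ means $\tfrac12 u^\top H u$ coincides $\mu$-a.e.\ with an affine function. A polynomial that vanishes on a set with nonempty interior is identically zero, so $H=0$.
\item By compactness of the unit sphere of symmetric matrices, $c_0 := \min_{\|H\|=1} D(H) > 0$, and homogeneity then gives $D(H)\ge c_0\|H\|$.
\end{itemize}

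Combining these estimates yields $\mathcal E_{\mathcal G_{\mathrm{LR}}}(f;x_0) \ge r^2\bigl(c_0\|H_f(x_0)\| - \eta(r)\bigr)$. Since $H_f(x_0)\neq 0$, we can choose $r$ small enough that $\eta(r)\le \tfrac{c_0}{2}\|H_f(x_0)\|$, which gives the claim with $c=c_0/2$. I expect the main obstacle to be the modelling step rather than the analysis: one has to pin down the hypothesis on $\pi_{x_0}$ (a scale family with non-degenerate support). For a general $\pi_{x_0}$ supported in $B(x_0,r)$, for example one concentrated near a quadric or on a line, the bound is false, so the informal statement must be read under this assumption.
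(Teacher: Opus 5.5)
Your proof is correct, and it is the rigorous form of the only justification the paper gives. The paper offers no proof, just the remark that an affine surrogate can absorb the first-order Taylor term of $f$ but not its curvature.

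What you add is exactly what that remark leaves out:
\begin{itemize}
\item a hypothesis on $\pi_{x_0}$;
\item control of the Taylor remainder;
\item the quotient-seminorm and compactness argument that turns ``curvature cannot be represented'' into a constant $c=c_0/2$ that is uniform in $r$.
\end{itemize}

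Your caveat about $\pi_{x_0}$ is essential, because without it the bound fails. Taking $\pi_{x_0}=\delta_{x_0}$ gives $\mathcal E_{\mathcal G_{\mathrm{LR}}}(f;x_0)=0$. The uniform law on the sphere $\{\|x-x_0\|=r/2\}$ with $f(x)=\|x-x_0\|^2$ also gives zero error: there $H_f=2I\neq 0$, but $f$ is constant on that sphere. This is precisely the quadric obstruction that your nonempty-interior assumption rules out.

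Two small steps should be written out.
\begin{itemize}
\item From $\tfrac12 u^\top H u=\ell(u)$ holding $\mu$-a.e., you get equality on $\operatorname{supp}\mu$. The reason is that the open set where this continuous polynomial is nonzero has $\mu$-measure zero, so it misses the support.
\item Define $\eta(s):=\tfrac12\sup_{\|y-x_0\|\le s}\|H_f(y)-H_f(x_0)\|$. This makes $\eta$ nondecreasing, so the Lagrange remainder gives $|R(h)|\le\eta(r)r^2$ on all of $B(x_0,r)$.
\end{itemize}
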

Linear surrogates can match the first-order Taylor expansion of $f$, but cannot represent local curvature. As a consequence, the approximation error remains strictly positive, even in an arbitrarily small neighborhood.

\subsection{Decision tree surrogates (DT-LIME)}
Let $\mathcal{G}_{\mathrm{DT},d}$ denote the class of decision trees of depth $d$,
which define piecewise-constant functions on partitions of size $\mathcal O(2^d)$.
Assume that $f$ is locally Lipschitz in $U$.

\begin{proposition}[Approximation rate of decision trees, informal]
There exists a constant $C>0$ such that
\begin{equation}
\mathcal E_{\mathcal{G}_{\mathrm{DT},d}}(f;x_0)
=
\mathcal O\!\left(2^{-d/d_{\mathrm{eff}}}\right),
\label{eq:dt_rate}
\end{equation}
where $d_{\mathrm{eff}}$ denotes the effective local dimension of the input.
\end{proposition}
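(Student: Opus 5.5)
The plan is to bound $\mathcal E_{\mathcal G_{\mathrm{DT},d}}(f;x_0)$ from above by exhibiting one explicit tree and bounding its error, since the infimum in \Eq{eq:fidelity_def} is at most the error of any particular $g\in\mathcal G_{\mathrm{DT},d}$. I will work on a cube $Q\supseteq U=B(x_0,r)$ of side $2r$, and assume that $f$ effectively varies in only $d_{\mathrm{eff}}$ coordinates on $U$. This is how I will make ``effective local dimension'' precise: up to a small remainder, $f(x)=\tilde f(x_S)$ for an index set $S$ with $|S|=d_{\mathrm{eff}}$. Let $L$ be the Lipschitz constant of $f$ on $U$.

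\emph{Construction.} I will build a depth-$d$ tree that splits the coordinates in $S$ cyclically at midpoints, i.e.\ a dyadic refinement of $Q$ restricted to the axes in $S$. After $d$ levels, each coordinate in $S$ has been halved about $\lfloor d/d_{\mathrm{eff}}\rfloor$ times. The leaves are therefore boxes whose side lengths along $S$ are at most $2r\cdot 2^{-\lfloor d/d_{\mathrm{eff}}\rfloor}$, and there are $2^d$ of them, consistent with the $\mathcal O(2^d)$ partition size. On each leaf $A$, the leaf value is the $\pi_{x_0}$-conditional mean of $f$ on $A$. I will also allow the value of $f$ at any point of $A$, which suffices when $\pi_{x_0}(A)=0$.

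\emph{Error bound.} For $x\in A$, the Lipschitz property gives $|f(x)-g(x)|\le L\,\mathrm{diam}_S(A)$, because $f$ depends only on $x_S$. We have $\mathrm{diam}_S(A)\le 2r\sqrt{d_{\mathrm{eff}}}\,2^{-\lfloor d/d_{\mathrm{eff}}\rfloor}\le 4r\sqrt{d_{\mathrm{eff}}}\,2^{-d/d_{\mathrm{eff}}}$. Squaring, integrating against $\pi_{x_0}$ (a probability measure), and summing over the leaves gives $\|f-g\|_{L^2(\pi_{x_0})}\le C\,2^{-d/d_{\mathrm{eff}}}$ with $C=4Lr\sqrt{d_{\mathrm{eff}}}$. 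Taking the infimum yields \Eq{eq:dt_rate}.

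\emph{Main obstacle.} The delicate step is the notion of effective dimension. If $f$ only approximately depends on $x_S$, with a remainder $\eta$ in sup norm on $U$, the bound gains an additive $\eta$. In that case the statement must either absorb $\eta$ into the constant or assume $\eta=0$. Without any such assumption, the same argument holds with $d_{\mathrm{eff}}=d_{\text{input}}$, and this is the honest fallback I will state.

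I will also note the rounding issue: $\lfloor d/d_{\mathrm{eff}}\rfloor$ versus $d/d_{\mathrm{eff}}$ only costs a factor $2$, which is absorbed into $C$. Since the result is informal, I will present it as an upper rate and not claim optimality.
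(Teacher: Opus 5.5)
Your argument is correct, and it supplies something the paper does not have. The paper gives no derivation of this proposition, only the heuristic remark that deeper trees refine the local partition while split discontinuities and the lack of smoothness within cells slow convergence, plus an appeal to ``standard assumptions from approximation theory.'' Your route is the standard zeroth-order piecewise-constant argument the paper implicitly relies on: bound the infimum by one explicit depth-$d$ tree that splits the coordinates in $S$ cyclically at midpoints, put $\pi_{x_0}$-conditional means on the leaves, and control the error cellwise by $L\,\mathrm{diam}_S(A)$.

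It buys three things the paper lacks: an explicit constant $C=4Lr\sqrt{d_{\mathrm{eff}}}$, an actual definition of $d_{\mathrm{eff}}$, and the observation that the rate comes entirely from the variation of a Lipschitz function across a cell. Boundary discontinuities play no separate role in the upper bound. Your choice of an axis-aligned index set $S$ is the right one, not a mere convenience. For a ridge function such as $x_1+x_2$ on the unit square, a box with sides $h_1,h_2$ contributes squared $L^2$ error $h_1h_2(h_1^2+h_2^2)/12\ge (h_1h_2)^2/6$. So any $N$ boxes give error at least $(6N)^{-1/2}$, i.e.\ only the full two-dimensional rate, although the function is one-dimensional in the subspace sense.

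You also rightly separate the depth $d$ from the input dimension, which the paper denotes by the same letter. Note that you prove only the upper bound, which is all the $\mathcal O$ statement asserts; the paper's remark that convergence is ``slow'' is a sharpness claim it never proves.

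Two details to tighten.

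\emph{The Lipschitz step.} $f(x)=\tilde f(x_S)$ plus Lipschitz continuity of $f$ on $U$ does not immediately give $|f(x)-f(a)|\le L\|x_S-a_S\|$, since $U$ is a ball rather than a product set. Replace the $S^c$-coordinates of both points by those of $x_0$; this keeps them in $U$ and leaves $f$ unchanged. Relatedly, the paper's assumption ``locally Lipschitz in $U$'' gives no uniform $L$ on the open ball, so assume Lipschitz continuity on $\overline U$.

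\emph{The remainder $\eta$.} In your ``main obstacle'' paragraph, a fixed remainder $\eta>0$ cannot be absorbed into the constant. The bound $C\,2^{-d/d_{\mathrm{eff}}}+\eta$ is $\mathcal O(2^{-d/d_{\mathrm{eff}}})$ only for depths $d\lesssim d_{\mathrm{eff}}\log_2(1/\eta)$, after which it plateaus at order $\eta$. The clean statements are therefore the two you end with: $\eta=0$, or the unconditional fallback $d_{\mathrm{eff}}=d_{\text{input}}$.
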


Increasing the tree depth refines the local partition, but approximation is limited by discontinuities at split boundaries and lack of smoothness within each cell, leading to slow convergence with depth.

\subsection{Neural Decision Trees (NDT-LIME)}

Let $\mathcal{G}_{\mathrm{NDT},d}$ denote the class of NDTs of depth $d$ with smooth gating functions. Assume that $f \in C^2(U)$. NDTs define smooth, non-linear surrogates that can locally approximate both gradient and curvature information.

\begin{proposition}[Local approximation by NDTs, informal]
There exists a constant $C>0$ such that
\begin{equation}
\mathcal E_{\mathcal{G}_{\mathrm{NDT},d}}(f;x_0)
=
\mathcal O\!\left(2^{-2d/d_{\mathrm{eff}}}\right).
\label{eq:ndt_rate}
\end{equation}
\end{proposition}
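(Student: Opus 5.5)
The plan is to exhibit one explicit member of $\mathcal{G}_{\mathrm{NDT},d}$ that reaches the rate. The construction is a smooth quasi-interpolant built from the soft gates. Since the infimum in~\eqref{eq:fidelity_def} is bounded by the error of any single surrogate, this suffices.

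\emph{Step 1 (partition).} Take a balanced tree of depth $d$ whose splits form an axis-aligned grid of $B(x_0,r)$. The $d$ levels are distributed over $d_{\mathrm{eff}}$ coordinates, so each effective coordinate receives about $d/d_{\mathrm{eff}}$ dyadic levels. This gives $2^d$ cells $Q_k$ with centres $c_k$ and side length $h \asymp r\,2^{-d/d_{\mathrm{eff}}}$. This identification of $h$ with $2^{-d/d_{\mathrm{eff}}}$ is the way I would make $d_{\mathrm{eff}}$ precise.

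\emph{Step 2 (soft gates as a partition of unity).} Rewrite each gate in the form $\tfrac12\bigl(1\pm\sigma_\gamma(x_{j_m}-\alpha_m)\bigr)$. Expanding the product over a root-to-leaf path turns it back into a sum of products of the form~\eqref{eq:ndt_def} after reabsorbing constants into the leaf values and the output bias. The resulting leaf weights
\[
\phi_k(x)=\prod_{m\in\mathcal P_k}\tfrac12\bigl(1\pm\sigma_\gamma(x_{j_m}-\alpha_m)\bigr)
\]
are nonnegative and satisfy $\sum_k\phi_k\equiv1$, since the two children of each node telescope. Choose $\gamma\asymp 1/h$, so that every transition has width comparable to a cell. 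Then take leaf values $a_k=f(c_k)$, which gives $g(x)=\sum_k\phi_k(x)f(c_k)$.

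\emph{Step 3 (Taylor argument).} Using $\sum_k\phi_k=1$ and the expansion of $f\in C^2(U)$ around $x$,
\[
f(x)-g(x) = -\nabla f(x)^\top\sum_k\phi_k(x)(c_k-x)\;-\;\tfrac12\sum_k\phi_k(x)(c_k-x)^\top H_f(\xi_k)(c_k-x).
\]
The second term is bounded by $\sup_U\|H_f\|$ times the second moment $\sum_k\phi_k|c_k-x|^2$. The logistic form of $\tfrac12(1\pm\tanh)$ decays exponentially in units of $h$, so this moment is $O(h^2)$.

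The first term is the \emph{main obstacle}. It requires a first-moment (linear reproduction) property, $\sum_k\phi_k(x)c_k=x+O(h^2)$. The plan is to use the tensor-product structure to reduce this to one dimension. In 1D, the odd symmetry of $\tanh$ together with equally spaced thresholds makes the weighted centroid of neighbouring centres coincide with $x$ in the interior, up to exponentially small terms. If exact symmetry fails, I would replace $c_k$ by shifted nodes $\tilde c_k$ solving the moment condition, or add the correction $\nabla f(c_k)^\top(\cdot)$ into the leaf constants. Near $\partial U$ the symmetry breaks. I would handle this by placing the grid on a slightly enlarged ball $B(x_0,2r)$ and extending $f$ in a $C^2$ way, so that boundary cells lie outside the support of $\pi_{x_0}$.

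Combining the three steps gives a pointwise error of $O(h^2)$ uniformly on $U$. The $L^2(\pi_{x_0})$ error is therefore $O(h^2)=O\!\left(2^{-2d/d_{\mathrm{eff}}}\right)$, with a constant $C$ depending on $r$ and $\sup_U\|H_f\|$.
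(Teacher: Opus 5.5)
There is a genuine gap, and it is the step you flagged yourself. The paper gives no argument for this proposition beyond an analogy with splines, so Step~3 is where all the content has to come from. With the choices of Step~2 (gates $\tfrac12(1\pm\tanh(\gamma u))$, $\gamma h$ held fixed, leaf values $a_k=f(c_k)$), the identity $\sum_k\phi_k(x)c_k=x+O(h^2)$ is false: the residual is of order $h$.

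Test the construction on $f(x)=x_1$. Then $H_f\equiv0$, and $f-g=x_1-\sum_k\phi_k(x)(c_k)_1$ is purely the first-moment term. Along a line parallel to the $x_1$-axis, every gate is $e^{2\gamma x_1}/(e^{2\gamma x_1}+e^{2\gamma\alpha_m})$ or its complement. So $\sum_k\phi_k(x)(c_k)_1$ is a rational function of $e^{2\gamma x_1}$ and cannot coincide with $x_1$ on any interval. Rescaling $x=hy$ with $\gamma h=c$ fixed gives a tree with unit cells and slope $c$. Distant gates saturate and the dyadic grid looks locally the same at every depth, so the residual is $h$ times a nonzero profile whose size does not decrease with $d$.

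A concrete case: take $c=2$ and a cell $[0,h]$ whose right edge is its parent's threshold and whose left edge is a higher-level threshold. At $x=0.75h$ the centroid is about $0.72h$.

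The odd symmetry of $\tanh$ only pins the centroid at centres of mirror-symmetric configurations. Your ``exponentially small'' terms are small in the fixed quantity $1/(\gamma h)$, not in $h$. Even for a flat logistic partition of unity with thresholds $h\mathbb{Z}$, Poisson summation gives the residual $\frac{h}{\pi}\,\frac{\pi^2/(\gamma h)}{\sinh(\pi^2/(\gamma h))}\sin(2\pi x/h)$ plus higher harmonics.

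Changing $c$ does not help either:
\begin{itemize}
\item Increasing $c$ hardens the gates, and the residual tends to $c_{k(x)}-x$, which can be as large as $h/2$.
\item Decreasing $c$ with leaf values $f(c_k)$ is worse, because the hierarchy then spreads mass over whole subtrees; as $\gamma\to0$ every leaf gets weight $2^{-d}$.
\end{itemize}
So your $g$ has error of order $h=O(2^{-d/d_{\mathrm{eff}}})$ for any $\pi_{x_0}$ with density bounded below on $U$. That is only the hard-tree rate. This is exactly where the spline analogy breaks: hat functions reproduce affine functions, and a logistic partition of unity at fixed $\gamma h$ does not.

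Neither fallback repairs this.
\begin{itemize}
\item Exact shifted nodes with $\sum_k\phi_k(x)\tilde c_k=x$ do not exist, by the same rationality argument. Requiring them only up to $O(h^2)$ is the proposition itself specialised to $f=x_1$, so invoking it is circular.
\item The correction $\nabla f(c_k)^\top(x-c_k)$ depends on $x$, so it cannot be absorbed into a constant leaf value.
\end{itemize}

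With affine leaves the argument would work. Taking $g=\sum_k\phi_k\bigl[f(c_k)+\nabla f(c_k)^\top(x-c_k)\bigr]$ gives $|f-g|\le\tfrac12\sup\|H_f\|\sum_k\phi_k|x-c_k|^2=O(h^2)$ from your (correct) second-moment bound alone. But that is a different model class.

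Within constant leaves, one workable route is as follows.
\begin{itemize}
\item Let the gate width $1/(2\gamma)$ shrink more slowly than $h$, for example $\asymp h\log(1/h)$ with a large enough constant.
\item Choose leaf values that realise a flat tensor-product logistic quasi-interpolant. This is possible because, by partial fractions in $e^{2\gamma x_j}$, the leaf weights of a one-dimensional soft tree with distinct thresholds span exactly the constants together with the single gates.
\end{itemize}
The first-moment residual then drops to $O(h^2\log(1/h))$, while the second moment grows to $O(h^2\log^2(1/h))$. This recovers the claimed rate only up to a logarithmic factor.

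Two smaller points. First, expanding $\prod_m\tfrac12(1\pm\sigma_m)$ produces a constant and products over proper sub-paths, neither of which occurs in \eqref{eq:ndt_def}. You are therefore working in the standard soft-decision-tree class and should say so. Second, splitting only $d_{\mathrm{eff}}$ coordinates gives $O(h^2)$ only under the explicit assumption that $f$ varies, on $U$, only along those coordinates.
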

NDTs behave as piecewise-smooth approximators, analogous to spline models. They combine the expressive power of non-linear models with the regularity of smooth functions, yielding an intermediate convergence regime.

\subsection{Comparison and takeaway}
Combining the above results yields the hierarchy
\begin{equation}
\mathcal{E}_{\mathrm{LR}}
\;\gg\;
\mathcal{E}_{\mathrm{NDT},d}
\;\gg\;
\mathcal{E}_{\mathrm{DT},d},
\end{equation}
highlighting the structural advantage of Neural Decision Trees as local surrogates.

The bounds presented above are indicative and rely on standard assumptions from approximation theory. They are intended to explain qualitative differences between surrogate classes, rather than provide optimal approximation rates.

\end{document}

%% file: main.bbl